\newtheorem{thm}{Theorem}
\newtheorem{dfn}{Definition}
\newtheorem{lem}{Lemma}
\newenvironment{proof}{\noindent{\bf {Proof: }}\ }{\hfill$\blacksquare$ \vspace{1mm}}
\newtheorem{claim}{Claim}
\newcommand{\fmk}{\mathbf{F}^k_m}
\newcommand{\fmt}{\mathbf{F}^2_4}
\newcommand{\hmk}{\mathbf{H}^k}
\newcommand{\rank}{\text{rank}}
\newcommand{\krank}{\text{rank}_K}
\newcommand\Omit[1]{}
\title{Learning Mixtures of Plackett-Luce Models}
\author{Zhibing Zhao\\
Computer Science Department\\
Rensselaer Polytechnic Institute\\
Troy, NY 12180 USA\\
\texttt{zhaoz6@rpi.edu}
\and 
Peter Piech\\
Computer Science Department\\
Rensselaer Polytechnic Institute\\
Troy, NY 12180 USA\\
\texttt{piechp@rpi.edu}
\and 
Lirong Xia\\
Computer Science Department\\
Rensselaer Polytechnic Institute\\
Troy, NY 12180 USA\\
\texttt{xial@cs.rpi.edu}
}
\date{}
\begin{document}

\maketitle

\begin{abstract}%
In this paper we address the identifiability and efficient learning problems of finite mixtures of Plackett-Luce models for rank data. We prove that for any $k\geq 2$, the mixture of $k$ Plackett-Luce models for no more than $2k-1$ alternatives is non-identifiable and this bound is tight for $k=2$.
For generic identifiability, we prove that the mixture of $k$ Plackett-Luce models over $m$ alternatives is {\em generically identifiable} if $k\leq\lfloor\frac {m-2} 2\rfloor!$. 
We also propose an efficient generalized method of moments (GMM) algorithm to learn the mixture of two Plackett-Luce models and show that the algorithm is consistent. Our experiments show that our GMM algorithm is significantly faster than the EMM algorithm by \citet{Gormley08:Exploring}, while achieving competitive statistical efficiency.
\end{abstract}

\section{Introduction}
\label{intro}

In many machine learning problems the data are composed of rankings over a finite number of {\em alternatives}~\cite{Marden95:Analyzing}. For example, meta-search engines aggregate rankings over webpages from individual search engines~\cite{Dwork01:Rank}; rankings over documents are combined to find the most relevant document in information retrieval~\cite{Liu11:Learning}; noisy answers from online workers are aggregated to produce a more accurate answer in crowdsourcing~\cite{Mao13:Better}. Rank data are also very common in economics and political science. For example, consumers often give discrete choices data~\cite{McFadden74:Conditional} and voters often give rankings over presidential candidates~\cite{Gormley08:Exploring}.

Perhaps the most commonly-used statistical model for  rank data is the {\em Plackett-Luce} model~\cite{Plackett75:Analysis,Luce59:Individual}. The Plackett-Luce model is a natural generalization of multinomial logistic regression. In a Plackett-Luce model, each alternative is parameterized by a positive number that represents the ``quality'' of the alternative. The greater the quality, the the chance the alternative will be ranked at a higher position.

In practice, {\em mixtures} of Plackett-Luce models can provide better fitness than a single Plackett-Luce model. An additional benefit is that the learned parameter of a mixture model can naturally be used for clustering~\cite{McLachlan88:Mixture}. The $k$-mixture of Plackett-Luce combines $k$ individual Plackett-Luce models via a linear vector of {\em mixing coefficients}. 
For example, \citet{Gormley08:Exploring} propose an {\em Expectation Minorization Maximization (EMM)} algorithm to compute the MLE of Plackett-Luce mixture models. The EMM was applied  to an Irish election dataset with $5$ alternatives and the four components in the mixture model are interpreted as {\em voting blocs}.

Surprisingly, the {\em identifiability} of Plackett-Luce mixture models is still unknown. Identifiability is an important property for statistical models, which requires that different parameters of the model have different distributions over samples. Identifiability is crucial because if the model is not identifiable, then there are cases where it is impossible to estimate the parameter from the data, and in such cases conclusions drawn from the learned parameter can be wrong. 
In particular, if Plackett-Luce mixture models are not identifiable, then the voting bloc produced by the EMM algorithm of \citet{Gormley08:Exploring} can be dramatically different from the ground truth.

In this paper, we address the following two important questions about the theory and practice of Plackett-Luce mixture models for rank data.

\noindent{\bf Q1.} Are Plackett-Luce mixture models identifiable? 

\noindent{\bf Q2.} How can we efficiently learn Plackett-Luce mixture models? 

Q1 can be more complicated than one may think because the non-identifiability of a mixture model usually comes from two sources. The first is {\em label switching}, which means that if we label the components of a mixture model differently, the distribution over samples does not change~\cite{Stephens00:Dealing}. This can be avoided by ordering the components and merging the same components in the mixture model. The second is more fundamental, which states that the mixture model is non-identifiable even after ordering and merging duplicate components. Q1 is about the second type of non-identifiability. 

The EMM algorithm by~\citet{Gormley08:Exploring} converges to the MLE, but as we will see in the experiments, it can be very slow when the data size is not too small. Therefore, to answer Q2, we want to design learning algorithms that are much faster than the EMM without sacrificing too much statistical efficiency, especially mean squared error (MSE).

\subsection{Our Contributions}\label{sec:cont}
We answer Q1 with the following theorems. The answer depends on the number of components $k$ in the mixture model and the number of alternatives $m$. 

\vspace{2mm}
{\noindent\bf Theorem~\ref{thm:nid} and~\ref{thm:id}.} {\em 
For any $m\ge 2$ and any $k\geq \frac{m+1}{2}$, the $k$-mixture Plackett-Luce model (denoted by $k$-PL) is {\bf non-identifiable}. This lower bound on $k$ as a function of $m$ is tight for $k=2$ ($m=4$). }
\vspace{1mm}

The second half of the theorem is positive: the mixture of two Plackett-Luce models is identifiable for four or more alternatives.  We conjecture that the bound is tight for all $k>2$.

The $k$-PL is {\em generically} identifiable for $m$ alternatives, if the Lebesgue measure of non-identifiable parameters is $0$. We prove the following positive results for $k$-PL.

\vspace{2mm}
{\noindent\bf Theorem~\ref{thm:gid}.} {\em 
For any $m\ge 6$ and any $k\le  \lfloor\frac {m-2} 2\rfloor!$, the $k$-PL is generically identifiable. }
\vspace{2mm}

We note that $\lfloor\frac {m-2} 2\rfloor!$ is exponentially larger than the lower bound $\frac{m+1}{2}$ for (strict) identifiability. One interpretation of the theorem is that, when $\frac{m}{2}+1\le k\le \lfloor\frac {m-2} 2\rfloor!$, even though $k$-PL is not identifiable in the strict sense, one may not need to worry too much in practice due to generic identifiability.

For Q2, we propose a generalized method of moments (GMM)\footnote{This should not be confused with {\em Gaussian mixture models}.} algorithm~\citep{Hansen82:Large} to learn the $k$-PL. We illustrate the algorithm for $k=2$ and $m\ge 4$, and prove that the algorithm is consistent, which means that when the data are generated from $k$-PL and the data size $n$ goes to infinity, the algorithm will reveal the ground truth with probability that goes to $1$. We then compare our GMM algorithm and the EMM algorithm~\cite{Gormley08:Exploring} w.r.t.~statistical efficiency (mean squared error) and computational efficiency in synthetic experiments. As we will see, in Section~\ref{sec:exp}, our GMM algorithm is significantly faster than the EMM algorithm while achieving competitive statistical efficiency. Therefore, we believe that our GMM algorithm is a promising candidate for learning Plackett-Luce mixture models from big rank data.

\subsection{Related Work and Discussions}
Most previous work in mixture models (especially Gaussian mixture models) focuses on cardinal data~\cite{Teicher61:Identifiability,Teicher63:Identifiability,McLachlan04:Finite,Kalai12:Disentangling, Dasgupta99:Learning}. Little is known about the identifiability of mixtures of models for rank data. 

For rank data, \citet{Iannario10:On-the-identifiability} proved the identifiability of the mixture of shifted binomial model and the uniform models.  \citet{Awasthi14:Learning} proved the identifiability of mixtures of two Mallows' models. Mallows mixture models were also studied by \citet{Lu14:Effective} and \citet{Chierichetti15:On}. Our paper, on the other hand, focuses on mixtures of Plackett-Luce models without any assumptions on data since identifiability is a property of a statistical model. \citet{Ammar14:What} discussed non-identifiability of Plackett-Luce mixture models when data consist of {\em only} limited length of partial rankings. \citet{Oh14:Learning} provided sufficient conditions on data for Plackett-Luce mixture models to be identifiable. \citet{Suh16:Adversarial} studied the possibility to identify the top $K$ alternatives of two mixtures of Plackett-Luce models with respect to the size of datasets. 
  
Technically, part of our (non-)identifiability proofs is motivated by the work of~\citet{Teicher63:Identifiability}, who obtained sufficient conditions for the identifiability of finite mixture models. However, technically these conditions cannot be directly applied to $k$-PL because they work either for finite families (Theorem~1 in~\cite{Teicher63:Identifiability}) or for cardinal data (Theorem~2 in~\cite{Teicher63:Identifiability}). Neither is the case for mixtures of Plackett-Luce models. To prove our (non-)identifiability theorems, we develop novel applications of the Fundamental Theorem of Algebra to analyze the rank of  a matrix $\fmk$ that represents $k$-PL (see Preliminaries for more details). Our proof for generic identifiability is based on a novel application of the tensor-decomposition approach that analyzes the generic {\em Kruskal's rank} of matrices  advocated by~\citet{Allman09:Identifiability}.

In addition to being important in their own right, our (non)-identifiability theorems also carry a clear message that has been overlooked in the literature: when using Plackett-Luce mixture models to fit rank data, one must be very careful about the interpretation of the learned parameter. Specifically, when $m\le 2k-1$, it is necessary to double-check whether the learned parameter is identifiable (Theorem~\ref{thm:nid}), which can be computationally hard. On the positive side, identifiability may not be a big concern in practice under a much milder condition ($k\le \lfloor\frac {m-2} 2\rfloor!$, Theorem~\ref{thm:gid}). 

\citet{Gormley08:Exploring} used $4$-PL to fit an Irish election dataset with $5$ alternatives. According to our Theorem~\ref{thm:nid}, $4$-PL for $5$ alternatives  is non-identifiable. Moreover, our generic identifiability theorem (Theorem~\ref{thm:gid}) does not apply because $m=5<6$. Therefore, it is possible that there exists another set of voting blocs and mixing coefficients with the same likelihood as the output of the EMM algorithm. Whether it is true or not, we believe that it is important to add discussions and justifications of the uniqueness of the voting blocs obtained by~\citet{Gormley08:Exploring}.

Parameter inference for single Plackett-Luce models is studied in \cite{Cheng10:Label} and \cite{Azari13:Generalized}. \citet{Azari13:Generalized} proposed a GMM, 
which is quite different from our method, and cannot be applied to Plackett-Luce mixture models. The MM algorithm by \citet{Hunter04:MM}, which is compared in \cite{Azari13:Generalized}, is also very different from the EMM that is being compared in this paper.

\section{Preliminaries} 
 
Let $\mathcal{A}=\{a_i|i=1, 2, \cdots, m\}$ denote a set of $m$ alternatives. Let $\mathcal{L}(\mathcal{A})$ denote the set of linear orders (rankings), which are transitive, antisymmetric and total binary relations, over $\mathcal{A}$. A ranking is often denoted by $a_{i_1}\succ a_{i_2}\succ\cdots\succ a_{i_m}$, which means that $a_{i_1}$ is the most preferred alternative, $a_{i_2}$ is the second preferred, $a_{i_m}$ is the least preferred, etc. Let $P=(V_1, V_2, \cdots, V_n)$ denote the data (also called a {\em preference profile}), where for all $j\leq n, V_j\in\mathcal{L}(\mathcal{A})$.

\begin{dfn}
(Plackett-Luce model). The parameter space is $\Theta=\{\vec{\theta}=\{\theta_i|i=1, 2, \cdots, m, \theta_i\in[0, 1], \sum_{i=1}^m\theta_i=1\}\}$. The sample space is $\mathcal{S}=\mathcal{L}(\mathcal{A})^n$. Given a parameter  $\vec\theta\in\Theta$, the probability of any ranking $V=a_{i_1}\succ a_{i_2}\succ\cdots\succ a_{i_m}$ is

\hfill$\Pr\nolimits_\text{PL}(V|\vec{\theta})=\frac {\theta_{i_1}} {1}\times\frac {\theta_{i_2}}
{\sum_{p>1}\theta_{i_p}}\times\cdots\times\frac {\theta_{i_{m-1}}} {\theta_{i_{m-1}}+\theta_{i_m}}\hfill$

\end{dfn}

We assume that data are generated i.i.d.~in the Plackett-Luce model. Therefore, given a preference profile $P$ and $\vec \theta\in\Theta$, we have
$
\Pr_\text{PL}(P|\vec{\theta})=\prod^n_{j=1}\Pr_\text{PL}(V_j|\vec{\theta})
$.

The  Plackett-Luce model has the following intuitive explanation. Suppose there are $m$ balls, representing $m$ alternatives in
an opaque bag. Each ball $a_i$ is assigned a quality value $\theta_i$. Then, we generate a ranking in $m$ stages. In each stage, we take one ball out of the bag. The probability for each remaining ball being taken out is the value assigned to it over the sum of the values assigned to the remaining balls. The order of drawing is the ranking over the alternatives.

We require $\sum_i\theta_i=1$ to normalize the parameter so that the Plackett-Luce model is identifiable. It is not hard to verify that for any Plackett-Luce model, the probability for the alternative $a_p$ ($p\leq m$)  to be ranked at the top of a ranking is $\theta_p$; the probability for $a_p$ to be ranked at the top and $a_q$ ranked at the second position is $\frac {\theta_p\theta_q} {1-\theta_p}$, etc.

\begin{dfn}
($k$-mixture Plackett-Luce model). Given $m\geq 2$ and $k\geq 2$, we define the {\em $k$-mixture Plackett-Luce model} as follows. The sample space is $\mathcal{S}=\mathcal{L}(\mathcal{A})^n$. The parameter space has two parts. The first part is the {\em mixing coefficients} $(\alpha_1,\ldots, \alpha_k)$ where for all $r\leq k$, $\alpha_r\ge0$, and $\sum_{r=1}^k\alpha_r=1$. The second part is $(\vec \theta^{(1)},\vec \theta^{(2)},\ldots,\vec \theta^{(k)})$, where $\vec \theta^{(r)}=[\theta^{(r)}_1, \theta^{(r)}_2,\cdots,\theta^{(r)}_m]^\top$ is the parameter of the $r$-th Plackett-Luce component. The probability of a ranking $V$ is

\hfill$
\Pr\nolimits_{k\text{-PL}}(V|\vec{\theta})=\sum^k_{r=1}\alpha_r\Pr\nolimits_{\text{PL}}(V|\vec\theta^{(r)})
,\hfill$

where $\Pr\nolimits_{\text{PL}}(V|\vec\theta^{(r)})$ is the probability of $V$ in the $r$-th Plackett-Luce model given $\vec \theta^{(r)}$.
\end{dfn}
For simplicity we use $k$-PL to denote the $k$-mixture Plackett-Luce model.

\begin{dfn}
(Identifiability) Let $\mathcal{M}=\{\Pr(\cdot|\vec{\theta}): \vec{\theta}\in\Theta\}$ be a statistical model. $\mathcal{M}$ is identifiable if  for all $\vec{\theta}, \vec{\gamma}\in\Theta$, we have $
\Pr(\cdot|\vec{\theta})=\Pr(\cdot|\vec{\gamma})\Longrightarrow
\vec{\theta}=\vec{\gamma}$.
\end{dfn}
In this paper, we slightly modify this definition to eliminate the label switching problem. We say that $k$-PL is identifiable if there do not exist (1) $1\le k_1, k_2\leq k$, {\em non-degenerate} $\vec\theta^{(1)},  \vec\theta^{(2)}, \cdots, \vec\theta^{(k_1)}$, $\vec\gamma^{(1)}, \vec\gamma^{(2)}, \cdots, \vec\gamma^{(k_2)}$, which means that these $k_1+k_2$ vectors are pairwise different; (2) all strictly positive mixing coefficients $(\alpha_1^{(1)},\ldots,\alpha_{k_1}^{(1)})$ and $(\alpha_1^{(2)},\ldots,\alpha_{k_2}^{(2)})$, so that for all rankings $V$ we have

$\hfill\sum^{k_1}_{r=1}\alpha^{(1)}_r\Pr\nolimits_{\text{PL}}(V|\vec\theta^{(r)})=\sum^{k_2}_{r=1}\alpha^{(2)}_r\Pr\nolimits_{\text{PL}}(V|\vec\gamma^{(r)})\hfill$

Throughout the paper, we will represent a distribution over the $m!$ rankings over $m$ alternatives for a Plackett-Luce component with parameter $\vec \theta^{(r)}$ as a column vector $\vec f_m(\vec\theta)$ with $m!$ elements, one for each ranking and whose value is the probability of the corresponding ranking. For example, when $m=3$, we have
\begin{equation*}%
\vec{f_3}(\vec{\theta})=
\begin{pmatrix}
\Pr(a_1\succ a_2\succ a_3|\vec{\theta})\\
\Pr(a_1\succ a_3\succ a_2|\vec{\theta})\\
\Pr(a_2\succ a_1\succ a_3|\vec{\theta})\\
\Pr(a_2\succ a_3\succ a_1|\vec{\theta})\\
\Pr(a_3\succ a_1\succ a_2|\vec{\theta})\\
\Pr(a_3\succ a_2\succ a_1|\vec{\theta})
\end{pmatrix}=
\begin{pmatrix}
\frac {\theta_1\theta_2} {1-\theta_1}\\
\frac {\theta_1\theta_3} {1-\theta_1}\\
\frac {\theta_1\theta_2} {1-\theta_2}\\
\frac {\theta_2\theta_3} {1-\theta_2}\\
\frac {\theta_1\theta_3} {1-\theta_3}\\
\frac {\theta_2\theta_3} {1-\theta_3}
\end{pmatrix}
\end{equation*}

Given $\vec \theta^{(1)},\ldots,\vec{\theta^{(2k)}}$, we define $\mathbf{F}^k_m$ as a $m!\times 2k$ matrix for $k$-PL with $m$ alternatives
\begin{equation}\label{fkm}
\mathbf{F}^k_m=
\begin{bmatrix}
\vec{f_m}(\vec\theta^{(1)}) & \vec{f_m}(\vec\theta^{(2)}) & \cdots & \vec{f_m}(\vec\theta^{(2k)})
\end{bmatrix}
\end{equation}
We note that $\mathbf{F}^k_m$ is a function of $\vec \theta^{(1)},\ldots,\vec\theta^{(2k)}$, which are often omitted. We prove the identifiability or non-identifiability of $k$-PL by analyzing the rank of $\mathbf{F}^k_m$. The reason that we consider $2k$ components is that we want to find (or argue that we cannot find) another $k$-mixture model that has the same distribution as the original one.

\section{Identifiability of Plackett-Luce Mixture Models}

We first prove a general lemma to reveal a relationship between the rank of $\fmk$ and the identifiability of Plackett-Luce mixture models. We recall that a set of vectors is non-degenerate if its elements are pairwise different.

\begin{lem}
\label{lem:rel}
If the rank of $\fmk$ is $2k$ for all non-degenerate $\vec \theta^{(1)},\ldots,\vec\theta^{(2k)}$, then $k$-PL is identifiable. Otherwise $(2k-1)$-PL is non-identifiable.
\end{lem}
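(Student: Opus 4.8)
The plan is to prove the two implications separately, both by reducing identifiability questions to linear-algebraic statements about the matrix $\fmk$. First suppose that $\rank(\fmk)=2k$ for every choice of non-degenerate $\vec\theta^{(1)},\ldots,\vec\theta^{(2k)}$, and suppose toward a contradiction that $k$-PL is non-identifiable. By the modified definition of identifiability, there exist $k_1,k_2\le k$, pairwise-distinct non-degenerate parameter vectors $\vec\theta^{(1)},\ldots,\vec\theta^{(k_1)},\vec\gamma^{(1)},\ldots,\vec\gamma^{(k_2)}$, and strictly positive mixing coefficients summing to $1$ on each side, such that $\sum_r \alpha_r^{(1)}\vec f_m(\vec\theta^{(r)})=\sum_r\alpha_r^{(2)}\vec f_m(\vec\gamma^{(r)})$ as vectors in $\mathbb{R}^{m!}$. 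I would then bundle all $k_1+k_2$ distinct vectors together, pad with arbitrary further distinct parameter vectors if $k_1+k_2<2k$ (this is possible because $\Theta$ is infinite and $\vec f_m$ is injective on non-degenerate parameters, or one can simply allow a smaller matrix and note its rank is at most $k_1+k_2\le 2k$), and observe that the displayed equality exhibits a nonzero vector in the kernel of the resulting $m!\times(k_1+k_2)$ submatrix of some $\fmk$: namely $(\alpha_1^{(1)},\ldots,\alpha_{k_1}^{(1)},-\alpha_1^{(2)},\ldots,-\alpha_{k_2}^{(2)})$, which is nonzero since all coefficients are strictly positive. Hence those columns are linearly dependent, so $\rank(\fmk)<2k$ for this choice, contradicting the hypothesis.

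For the second implication, suppose $\rank(\fmk)<2k$ for some non-degenerate $\vec\theta^{(1)},\ldots,\vec\theta^{(2k)}$; I want to conclude that $(2k-1)$-PL is non-identifiable. Rank deficiency gives a nonzero vector $\vec c=(c_1,\ldots,c_{2k})$ with $\fmk\vec c=0$, i.e.\ $\sum_{r=1}^{2k} c_r\vec f_m(\vec\theta^{(r)})=0$. Split the indices by sign of $c_r$: let $S^+=\{r:c_r>0\}$ and $S^-=\{r:c_r<0\}$ (indices with $c_r=0$ are discarded). Moving the negative terms to the other side yields $\sum_{r\in S^+}c_r\vec f_m(\vec\theta^{(r)})=\sum_{r\in S^-}(-c_r)\vec f_m(\vec\theta^{(r)})$, with all coefficients strictly positive on both sides. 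Both $S^+$ and $S^-$ are nonempty (since the $\vec f_m$ are probability vectors with nonnegative, nonzero entries, a vanishing combination with not-all-zero coefficients cannot have all coefficients of one sign), and each has size at most $2k-1$ (if one side had all $2k$ indices the other would be empty). Summing entries of the vector identity — each $\vec f_m(\vec\theta^{(r)})$ has entries summing to $1$ — gives $\sum_{r\in S^+}c_r=\sum_{r\in S^-}(-c_r)=:T>0$; normalizing both sides by $T$ produces two sets of strictly positive mixing coefficients each summing to $1$, over at most $2k-1$ components each, drawn from the pairwise-distinct non-degenerate vectors $\vec\theta^{(1)},\ldots,\vec\theta^{(2k)}$, with identical induced distributions. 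That is exactly a witness to non-identifiability of $(2k-1)$-PL.

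The only genuinely delicate point — and the step I would be most careful about — is verifying that both $S^+$ and $S^-$ are nonempty, i.e.\ that a nontrivial kernel vector of $\fmk$ cannot be sign-definite. This is where the probabilistic structure enters: each column $\vec f_m(\vec\theta^{(r)})$ lies in the open simplex (all $m!$ entries strictly positive for $\vec\theta^{(r)}$ in the interior, and at least nonnegative and summing to $1$ in general), so a combination $\sum_r c_r\vec f_m(\vec\theta^{(r)})$ with all $c_r\ge 0$ and some $c_r>0$ has strictly positive coordinate sum $\sum_r c_r>0$ and therefore cannot be the zero vector. I should also double-check the boundary behaviour of $\Theta$ (some $\theta_i$ equal to $0$), since the definition allows $\theta_i\in[0,1]$; a brief remark that the coordinate-sum argument only needs entries nonnegative and summing to one — which holds for every $\vec\theta\in\Theta$ — handles this. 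Everything else is bookkeeping: matching the sizes $k_1,k_2\le k$ in the first part and $\le 2k-1$ in the second, and invoking the modified identifiability definition verbatim.
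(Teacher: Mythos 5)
Your proposal is correct and follows essentially the same route as the paper: pad the $k_1+k_2$ witnesses of non-identifiability to a non-degenerate set of $2k$ parameters and exhibit the signed mixing coefficients as a kernel vector for one direction, and split a kernel vector by sign and renormalize for the other. You are in fact slightly more careful than the paper, which leaves implicit both that a sign-definite kernel vector is impossible (your coordinate-sum argument) and that the two sides renormalize to mixing coefficients summing to one.
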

\begin{proof}Suppose for the sake of contradiction the rank of $\fmk$ is $2k$ for all non-degenerate $\vec \theta^{(1)},\ldots,\vec\theta^{(2k)}$ but $k$-PL is non-identifiable. Then, there exist non-degenerate $\vec\theta^{(1)},  \vec\theta^{(2)}, \cdots, \vec\theta^{(k_1)}$, $\vec\gamma^{(1)}, \vec\gamma^{(2)}, \cdots, \vec\gamma^{(k_2)}$ and all strictly positive mixing coefficients $(\alpha_1^{(1)},\ldots,\alpha_{k_1}^{(1)})$ and $(\alpha_1^{(2)},\ldots,\alpha_{k_2}^{(2)})$,  such that for all rankings $V$, we have

$\hfill\sum^{k_1}_{r=1}\alpha^{(1)}_r\Pr\nolimits_{\text{PL}}(V|\vec\theta^{(r)})=\sum^{k_2}_{r=1}\alpha^{(2)}_r\Pr\nolimits_{\text{PL}}(V|\vec\gamma^{(r)})\hfill$

Let $\vec\delta^{(1)}, \vec\delta^{(2)},\ldots,\vec\delta^{(2k-(k_1+k_2))}$ denote any $2k-(k_1+k_2)$ vectors so that $\{\vec\theta^{(1)}, \ldots, \vec\theta^{(k_1)}, \vec\gamma^{(1)}, \ldots,\\ \vec\gamma^{(k_2)}, \vec\delta^{(1)}, \ldots, \vec\delta^{(2k-(k_1+k_2))}\}$ is non-degenerate. It follows that the rank of the corresponding $\fmk$ is strictly smaller than $2k$, because
\begin{equation*}
\sum^{k_1}_{r=1}\alpha^{(1)}_r\Pr\nolimits_{\text{PL}}(V|\vec\theta^{(r)})-\sum^{k_2}_{r=1}\alpha^{(2)}_r\Pr\nolimits_{\text{PL}}(V|\vec\gamma^{(r)})+\sum_{r=1}^{(2k-k_1-k_2)}\vec\delta^{(r)}\cdot 0=0.
\end{equation*}
This is a contradiction.

On the other hand, if $\rank(\fmk)<2k$ for some non-degenerate $\vec \theta$'s, then there exists a nonzero vector $\vec\alpha=[\alpha_1, \alpha_2, \ldots, \alpha_{2k}]^\top$ such that $\mathbf{F}^k_m\cdot\vec\alpha=0$.  Suppose in $\vec\alpha$ there are $k_1$ positive elements and $k_2$ negative elements, then it follows that $\max\{k_1,k_2\}$-mixture model is not identifiable, and $\max\{k_1,k_2\}\le 2k-1$.
\end{proof}

\begin{thm}\label{thm:nid}
For any $m\ge 2$ and any $k\geq \frac{m+1}{2}$, the $k$-PL is {non-identifiable}.
\end{thm}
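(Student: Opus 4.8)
The plan is to show that $\lceil\tfrac{m+1}{2}\rceil$-PL is non-identifiable for every $m\ge 2$; since the (modified) identifiability condition for $k$-PL only concerns mixtures with $\le k$ components, non-identifiability is monotone in $k$ (a witness for $k'$-PL with $k_1,k_2\le k'$ is a witness for every $k\ge k'$), and since $k\ge\tfrac{m+1}{2}$ is equivalent to $k\ge\lceil\tfrac{m+1}{2}\rceil$ for integer $k$, this gives the theorem. By the sign bookkeeping already carried out in the proof of Lemma~\ref{lem:rel}, it suffices to produce $m+1$ pairwise-distinct parameter vectors $\vec\theta^{(1)},\dots,\vec\theta^{(m+1)}$ and a linear relation $\sum_{r=1}^{m+1} c_r\,\vec f_m(\vec\theta^{(r)})=\vec 0$ whose nonzero coefficients split into $\lceil\tfrac{m+1}{2}\rceil$ positive and $\lfloor\tfrac{m+1}{2}\rfloor$ negative ones: moving the negative terms across and normalizing (the two sides' coefficients automatically sum to the same positive value because each $\vec f_m$ has coordinate sum $1$) exhibits two distinct PL mixtures inducing the same ranking distribution.

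To build such a relation I would restrict attention to the one-parameter family $\vec\theta(t)=(t,\tfrac{1-t}{m-1},\dots,\tfrac{1-t}{m-1})$ with $t\in(0,1)$, in which $a_1$ is singled out and $a_2,\dots,a_m$ are interchangeable. Because the non-$a_1$ weights are equal, conditioning on the position of $a_1$ makes the rest of the Plackett-Luce draw uniform, so $\Pr(V\mid\vec\theta(t))=\tfrac{1}{(m-1)!}\,p_{\ell(V)}(t)$, where $\ell(V)$ is the rank of $a_1$ in $V$ and $p_\ell(t)=\Pr(a_1\text{ in position }\ell\mid\vec\theta(t))$. Hence $\vec f_m(\vec\theta(t))=M\vec q(t)$ for a fixed $m!\times m$ matrix $M$ (its $\ell$-th column is $\tfrac{1}{(m-1)!}$ times the indicator of rankings with $a_1$ in position $\ell$) and the curve $\vec q(t)=(p_1(t),\dots,p_m(t))^\top$, which lies in the affine hyperplane $\{\sum_\ell p_\ell=1\}\subseteq\mathbb R^m$. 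Any linear dependency among $\vec q(t_1),\dots,\vec q(t_{m+1})$ pushes forward through $M$ to one among the $\vec f_m(\vec\theta(t_i))$, and the $\vec\theta(t_i)$ are pairwise distinct as soon as the $t_i$ are.

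The crux is to show that $p_1,\dots,p_m$ form a Chebyshev (Haar) system on $(0,1)$, i.e.\ that $\det[p_j(t_i)]_{i,j}$ is sign-definite whenever $t_1<\dots<t_m$ lie in $(0,1)$. Expanding the Plackett-Luce draw probabilities gives $p_\ell(t)=\tfrac{(m-1)!}{(m-\ell)!}\cdot\tfrac{t(1-t)^{\ell-1}}{\prod_{j=1}^{\ell-1}[(m-1-j)+jt]}$, so over the common denominator $D(t)=\prod_{j=1}^{m-2}[(m-1-j)+jt]$ (strictly positive on $(0,1)$, indeed for all $t\ge 0$) each $p_\ell=N_\ell/D$ with $\deg N_\ell=m-1$. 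Two facts then finish: (i) $N_\ell$ vanishes to order exactly $\ell-1$ at $t=1$, so $N_1,\dots,N_m$ have pairwise-distinct vanishing orders and are linearly independent, whence $\{p_1,\dots,p_m\}$ spans the same $m$-dimensional space as $\{D^{-1},tD^{-1},\dots,t^{m-1}D^{-1}\}$; and (ii) the latter is a Chebyshev system because $\det[t_i^{j-1}/D(t_i)]=\bigl(\prod_i D(t_i)^{-1}\bigr)\prod_{i<i'}(t_{i'}-t_i)>0$ — a Vandermonde determinant, which is exactly where the Fundamental-Theorem-of-Algebra flavour (a degree-$(m-1)$ polynomial has at most $m-1$ roots) enters. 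Being a property of the spanned space, the Chebyshev property is inherited by $\{p_1,\dots,p_m\}$.

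Finally, fixing any $t_1<\dots<t_{m+1}$ in $(0,1)$, the $m+1$ vectors $\vec q(t_i)$ in the $(m-1)$-dimensional hyperplane are affinely dependent with a relation $\sum_i c_i\vec q(t_i)=\vec 0$ that is unique up to scale and has all $c_i\ne 0$ (any $m$ of them are affinely independent, by the Haar property); the Cramer's-rule expression $c_i=(-1)^i\det[\vec q(t_j)]_{j\ne i}$ together with positivity of every such minor forces the $c_i$ to strictly alternate in sign, so the split is exactly $\lceil\tfrac{m+1}{2}\rceil$ versus $\lfloor\tfrac{m+1}{2}\rfloor$. Pushing through $M$ and normalizing then yields two distinct PL mixtures over the pairwise-distinct parameters $\vec\theta(t_1),\dots,\vec\theta(t_{m+1})$ with the same ranking distribution, so $\lceil\tfrac{m+1}{2}\rceil$-PL — and hence every $k$-PL with $k\ge\tfrac{m+1}{2}$ — is non-identifiable. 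I expect the main obstacle to be the explicit evaluation of $p_\ell(t)$ and the attendant bookkeeping (sign-definiteness of $D$ on $(0,1)$; the degrees and the orders of vanishing at $t=1$ of the $N_\ell$); once ``Chebyshev system'' is established, everything downstream is soft linear algebra.
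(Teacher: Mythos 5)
Your proof is correct, and it shares the paper's central reduction: you restrict to the one-parameter family $\vec\theta(t)=(t,\tfrac{1-t}{m-1},\dots,\tfrac{1-t}{m-1})$, under which the ranking distribution factors through the position of $a_1$, so everything collapses to an $m\times(m+1)$ matrix of position probabilities --- the paper's $\hmk$ is exactly this matrix after row operations and removal of constants. Where you genuinely diverge is in how the null vector with the required sign pattern is produced. The paper writes the coefficients down explicitly, $\beta^\ast_r=\prod_{p=1}^{2k-3}(pe_r+2k-2-p)\big/\prod_{q\neq r}(e_r-e_q)$, verifies $\hmk\vec{\beta^\ast}=0$ via the Lagrange-interpolation identity $\sum_{s}f(e_s)/\prod_{t\neq s}(e_s-e_t)=0$ for $\deg f\le\nu-2$ (Lemmas~\ref{lem:zero}--\ref{lem:poly}), and reads off the $k$-versus-$k$ sign split directly from the ordered product of differences in the denominator. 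You instead prove that $p_1,\dots,p_m$ form a Haar system on $(0,1)$ --- clearing the positive common denominator $D$, checking that the numerators have pairwise-distinct vanishing orders at $t=1$ and hence span all polynomials of degree at most $m-1$, and reducing to a Vandermonde determinant --- and then invoke the general fact that the unique dependency among $m+1$ points on a Haar curve has strictly alternating signs (signed maximal minors of common sign). The two mechanisms are equivalent in substance (your Vandermonde computation and the paper's partial-fraction identity are two faces of Lagrange interpolation), but yours buys a cleaner account of the sign pattern and handles even $m$ and the monotonicity in $k$ explicitly, whereas the paper only gestures at these (``the proof for $m<2k-1$ is similar''). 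Two small points to tighten: the maximal minors need not all be \emph{positive}, only of one common sign, which follows from their nonvanishing together with continuity on the connected region $\{0<t_1<\dots<t_m<1\}$; and for $\ell=m$ the factor $(m-1)t$ appearing in the denominator of $p_m$ must first be cancelled against the numerator before $D(t)=\prod_{j=1}^{m-2}\bigl((m-1-j)+jt\bigr)$ can serve as the common denominator with $\deg N_m=m-1$.
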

\begin{proof} The proof is constructive and is based on a refinement of the second half of Lemma~\ref{lem:rel}. For any $k$ and $m=2k-1$, we will define $\vec\theta^{(1)},\ldots,\vec\theta^{(2k)}$ and $\vec\alpha=[\alpha_1,\ldots,\alpha_{2k}]^T$ such that (1) $\fmk\cdot \vec\alpha=0$ and (2) $\vec \alpha$ has $k$ positive elements and  $k$ negative elements. In each $\vec\theta^{(r)}$, the value for alternatives $\{a_2,\ldots,a_m\}$ are the same. The proof for any $m<2k-1$ is similar. 

Formally, let $m=2k-1$. For all $i\geq 2$ and $r\leq 2k$, we let $\theta^{(r)}_i=\frac {1-\theta^{(r)}_1} {2k-2}$, where $\theta^{(r)}_i$ is the parameter corresponding to the $i$th alternative of the $r$th model. We use $e_r$ to represent $\theta^{(r)}_1$ 
and we use $b_r$ to represent $\frac {1-\theta^{(r)}_1} {2k-2}$. It is not hard to check that 
the probability for $a_1$ to be ranked at the $i$th position in the $r$th Plackett-Luce model is
\begin{equation}\label{equmain:prob}
\frac{(2k-2)!}{(2k-1-i)!}\frac {e_r(b_r)^{i-1}}
{\prod^{i-1}_{p=0}(1-pb_r)}
\end{equation}
Then $\fmk$ can be reduced to a $(2k-1)\times (2k)$ matrix. Because $\rank(\fmk)\le 2k-1<2k$, Lemma~\ref{lem:rel} immediately tells us that $(2k-1)$-PL is non-identifiable for $2k-1$ alternatives, but this is much weaker than what we are proving in this theorem. We now define a new $(2k-1)\times (2k)$ matrix $\hmk$ obtained from $\fmk$ by performing the following linear operations on row vectors. (i) Make the first row of $\hmk$ to be $\vec 1$; (ii) for any $2\leq i\leq 2k-1$, the $i$th row of $\hmk$ is the probability for $a_1$ to be ranked at the $(i-1)$-th position according to \eqref{equmain:prob}; (iii) remove all constant factors.

More precisely, for any $e_r$ we define the following function.
\begin{equation*}
\vec{f^*}(e_r)=
\begin{pmatrix}
1\\
e_r\\
\frac {e_r(1-e_r)} {e_r+2k-3}\\
\vdots\\
\frac {e_r(1-e_r)^{2k-3}} {(e_r+2k-3)\cdots((2k-3)e_r+1)}
\end{pmatrix}
\end{equation*}

Then we define $
\hmk=[\vec{f^*} (e_1), \vec{f^*}(e_2), \cdots, \vec{f^{*}}(e_{2k})]
$.

\begin{lem}\label{lem:exist}
If there exist all different $e_1, e_2, \cdots, e_{2k}<1$ and a non-zero vector $\vec{\beta^\ast}=[\beta^\ast_1, \beta^\ast_2, \cdots, \beta^\ast_{2k}]^\top$ such that (i) $\hmk\vec{\beta^\ast}=0$ and (ii) $\vec{\beta^\ast}$ has $k$ positive elements and $k$ negative elements, then $k$-PL for $2k-1$ alternatives is not identifiable.
\end{lem}
\begin{proof}
W.l.o.g. assume $\beta^\ast_1, \beta^\ast_2, \cdots, \beta^\ast_k>0$ and $\beta^\ast_{k+1}, \beta^\ast_{k+2},
\beta^\ast_{2k}<0$. $\mathbf{H}^{k}_{2k-1}\vec{\beta^\ast}=0$ means that
\begin{equation*}
\sum^k_{r=1}\beta^\ast_r\vec{f_r}=-\sum^{2k}_{r=k+1}\beta^\ast_r\vec{f_r}
\end{equation*}
According to the first row in $\hmk$, we have $\sum_r\beta^\ast_r=0$. Let $S=\sum^k_{r=1}\beta^\ast_r$. Further let
$\alpha^\ast_r=\beta^\ast_r/S$ when $r=1, 2, \cdots, k$ and $\alpha^\ast_r=-\beta^\ast_r/S$ when $r=k+1, k+2, \cdots, 2k$. We have
\begin{equation*}
\sum^k_{r=1}\alpha^\ast_r\vec{f_r}=\sum^{2k}_{r=k+1}\alpha^\ast_r\vec{f_r}
\end{equation*}
where $\sum^k_{r=1}\alpha^\ast_r=1$ and $\sum^{2k}_{r=k+1}\alpha^\ast_r=1$. This means that  the model is not identifiable.
\end{proof}
Then, the theorem is proved by showing that the following $\vec \beta^*$ satisfies the conditions in Lemma~\ref{lem:exist}. For any $r\le 2k$, 
\begin{equation}\label{equ:mainbeta}
\beta^\ast_r=\frac {\prod^{2k-3}_{p=1} (p e_r+2k-2-p)} {\prod_{q\neq r} (e_r-e_q)}
\end{equation}
Note that the numerator is always positive. 
W.l.o.g. let $e_1<e_2<\cdots<e_{2k}$, then half of the denominators are positive and the other half are negative. We then use induction to prove that the conditions in Lemma~\ref{lem:exist} are satisfied in the following series of lemmas.

\begin{lem}\label{lem:zero}
$\sum_s\frac 1 {\prod_{t\neq s}(e_s-e_t)}=0$. 
\end{lem}

\begin{proof}
The partial fraction decomposition of the first term is
\begin{equation*}
\frac 1 {\prod_{q\neq 1}(e_1-e_q)}=\sum_{q\neq 1}(\frac {B_q} {e_1-e_q})
\end{equation*}
where $B_q=\frac 1 {\prod_{p\neq q, p\neq 1}(e_q-e_p)}$.

Namely,
\begin{equation*}
\frac 1 {\prod_{q\neq 1}(e_1-e_q)}=-\sum_{q\neq 1}(\frac 1
{\prod_{p\neq q}(e_q-e_p)})
\end{equation*}
We have
\begin{equation*}
\sum_s\frac 1 {\prod_{t\neq s}(e_s-e_t)}=\frac 1 {\prod_{q\neq
    1}(e_1-e_q)}+\sum_{q\neq 1}(\frac 1 {\prod_{p\neq q}(e_q-e_p)})=0
\end{equation*}
\end{proof}

\begin{lem}\label{lem:order} For all $\mu\leq\nu-2$, we have
$\sum^\nu_{s=1}\frac {(e_s)^\mu} {\prod_{t\neq s}(e_s-e_t)}=0$. 
\end{lem} 

\begin{proof}
Base case: When $\nu=2, \mu=0$, obviously
\begin{equation*}
\frac 1 {e_1-e_2}+\frac 1 {e_2-e_1}=0
\end{equation*}
Assume the lemma holds for $\nu=p$ and all $\mu\leq\nu-2$, that is $\sum^\nu_{s=1}\frac {e_s^\mu} {\prod_{t\neq s}(e_s-e_t)}=0$. When $\nu=p+1, \mu=0$, by Lemma \ref{lem:zero} we have 
\begin{equation*}
\sum^{p+1}_{s=1}\frac 1 {\prod_{t\neq s}(e_s-e_t)}=0
\end{equation*}
Assume $\sum^{p+1}_{s=1}\frac {e_s^q} {\prod_{t\neq s}(e_s-e_t)}=0$
for all $\mu=q, q\leq p-2$. For $\mu=q+1$,
\begin{align*}
\sum^{p+1}_{s=1}\frac {e_s^{q+1}} {\prod_{t\neq s}(e_s-e_t)}=&
\sum^{p+1}_{s=1}\frac {e_s^q e_{p+1}} {\prod_{t\neq s}(e_s-e_t)}+
\sum^{p+1}_{s=1}\frac {e_s^q (e_s-e_{p+1})} {\prod_{t\neq s}(e_s-e_t)}\\
=&e_{p+1}\sum^{p+1}_{s=1}\frac {e_s^q} {\prod_{t\neq s}(e_s-e_t)}+
\sum^p_{s=1}\frac {e_s^q} {\prod_{t\neq s}(e_s-e_t)}=0
\end{align*}
The last equality is obtained from the induction hypotheses.
\end{proof}

\begin{lem}\label{lem:poly}
Let $f(x)$ be any polynomial of degree $\nu-2$, then $\sum^\nu_{s=1}\frac {f(e_s)} {\prod_{t\neq s}(e_s-e_t)}=0$. 
\end{lem}
This can be easily derived from Lemma \ref{lem:order}. 
Now we are ready to prove that $\hmk\vec{\beta^\ast}=0$. Note that the degree of the numerator of $\beta^\ast_r$ is $2k-3$ (see Equation~(\ref{equ:mainbeta})). Let $[\hmk]_i$ denote the $i$-th row of $\hmk$. We have the following calculations.
\begin{align*}
[\hmk]_1\vec{\beta^\ast}=&\sum_{r=1}^{2k}\frac {\prod^{2k-3}_{p=1} (p e_r+2k-2-p)} {\prod_{q\neq r} (e_r-e_q)}=0\\
[\hmk]_2\vec{\beta^\ast}=&\sum_{r=1}^{2k}\frac {\prod^{2k-3}_{p=1}e_r(p e_r+2k-2-p)} {\prod_{q\neq r} (e_r-e_q)}=0\end{align*}
For any $2< i\le 2k-1$, we have
\begin{align*}
&[\hmk]_i\vec{\beta^\ast}\\
=&\sum_{r=1}^{2k}\frac {e_r(1-e_r)^{i-2}} {\prod^{i-2}_{p=1}(p e_r+2k-2-p)}\frac {\prod^{2k-3}_{p=1} (p e_r+2k-2-p)} {\prod_{q\neq r} (e_r-e_q)}\\
=&\sum_{r=1}^{2k}\frac {e_r(1-e_r)^{i-2}\prod^{2k-3}_{p=i-1}(p e_r+2k-2-p)} {\prod_{q\neq r} (e_r-e_q)}=0
&\text{(Lemma~\ref{lem:poly})}
\end{align*}
The last equality is obtained by letting $v=2k-2$ in Lemma~\ref{lem:poly}. Therefore, $\hmk\vec{\beta^\ast}=0$. Note that $\vec{\beta^\ast}$ is also the solution for less than $2k-1$ alternatives. The theorem follows after applying Lemma~\ref{lem:exist}. 

\end{proof}

\begin{thm}\label{thm:id}
For $k=2$, and any $m \ge 4$, the $2$-PL is identifiable. %
\end{thm}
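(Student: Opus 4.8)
The strategy is to use Lemma~\ref{lem:rel}: it suffices to show that for $k=2$ and any $m\ge 4$, the matrix $\mathbf{F}^2_m$ has rank $4$ whenever $\vec\theta^{(1)},\vec\theta^{(2)},\vec\theta^{(3)},\vec\theta^{(4)}$ are pairwise distinct. Equivalently, I must show that there is no nonzero $\vec\alpha=[\alpha_1,\alpha_2,\alpha_3,\alpha_4]^\top$ with $\sum_{r=1}^4\alpha_r\vec f_m(\vec\theta^{(r)})=\vec 0$. Since adding more alternatives can only add rows to $\mathbf{F}^2_m$, it is enough to handle the base case $m=4$; the case $m\ge 5$ follows by marginalizing a generic larger profile down to $4$ alternatives, or directly because the $m=4$ rows embed as a submatrix after summing over the positions of the extra alternatives. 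So the heart of the matter is $m=4$.

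**Key steps for $m=4$.**
First I would suppose for contradiction that $\sum_{r=1}^4 \alpha_r\vec f_4(\vec\theta^{(r)})=\vec 0$ with $\vec\alpha\neq\vec 0$, and consider the induced identity on marginal event probabilities — e.g.\ $\Pr(a_i\succ a_j\succ a_\ell)$, $\Pr(a_i \text{ on top})=\theta_i$, and pairwise-at-top probabilities $\theta_i\theta_j/(1-\theta_i)$ — each of which is a linear functional of $\vec f_4$, hence each yields a scalar equation $\sum_r\alpha_r g(\vec\theta^{(r)})=0$ for the corresponding $g$. In particular $\sum_r\alpha_r\theta^{(r)}_i=0$ for each $i$, and $\sum_r\alpha_r\theta^{(r)}_i\theta^{(r)}_j/(1-\theta^{(r)}_i)=0$ for each ordered pair $(i,j)$. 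The plan is then to treat this as a system and argue that the $4\times4$ matrix built from a well-chosen family of such functionals (one "$\vec 1$" row from $\sum_r\alpha_r=1\cdot$const, three top-probability rows, and enough second-order rows) is nonsingular for pairwise-distinct $\vec\theta^{(r)}$, forcing $\vec\alpha=\vec 0$. The cleanest route is probably to reduce to a Vandermonde-type / Cauchy-type determinant: after using $\sum_r\alpha_r\theta^{(r)}_i=0$ (three constraints) together with $\sum_r\alpha_r=$ (the all-ones relation), one is left to show a residual $4\times4$ system has only the trivial solution, and the functions $\theta_i\theta_j/(1-\theta_i)$, viewed appropriately, should give a matrix whose determinant factors into products of differences of coordinates, nonzero by distinctness. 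I would look for an explicit change of variables (as in the proof of Theorem~\ref{thm:nid}, where the parametrization by a single scalar $e_r$ turned $\mathbf{H}^k$ into something Vandermonde-like) that linearizes the problem.

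**Main obstacle.**
The delicate point is that "pairwise distinct $\vec\theta^{(r)}$" is a weak hypothesis: two parameter vectors can agree in some coordinates and differ only in others, and they can even have disjoint supports of disagreement across different pairs. So a single clean scalar parametrization like the $e_r$ trick of Theorem~\ref{thm:nid} will not be available in general — that trick exploited a very symmetric construction. I expect to need a case analysis (or a genericity-plus-continuity argument) according to which coordinates of the four vectors coincide, showing in each configuration that enough independent linear functionals of $\vec f_4$ separate them. The worst case is when the four vectors are "as degenerate as possible" while still being pairwise distinct, e.g.\ all sharing two common coordinate values; one must then check that the remaining two coordinates, together with the normalization $\sum_i\theta_i=1$, still yield four linearly independent evaluation functionals among the available first- and second-order marginals. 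Pinning down that this always works — i.e.\ that $m=4$ gives exactly enough "room" — is the crux, and is presumably why $m=4$ is the tight threshold (with $m=3$ failing). A secondary, more routine obstacle is verifying the marginalization step for $m\ge5$ preserves full rank, which should follow since the $m=4$ functionals are themselves marginals of the $m$-alternative distribution.
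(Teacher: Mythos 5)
Your high-level plan coincides with the paper's: invoke Lemma~\ref{lem:rel}, reduce to showing $\mathrm{rank}(\mathbf{F}^2_4)=4$ for pairwise-distinct components, extract first- and second-order marginal moments (top-choice probabilities $\theta_i^{(r)}$ and top-two probabilities $\theta_i^{(r)}\theta_j^{(r)}/(1-\theta_i^{(r)})$) as linear functionals of $\vec f_4$, and kill a hypothetical nonzero kernel vector by a root-counting argument. But the entire technical content of the theorem is the case analysis you defer to at the end, and you leave it unexecuted while correctly flagging it as ``the crux.'' That is a genuine gap: there is no single family of four functionals whose $4\times 4$ determinant factors into differences of coordinates uniformly over all non-degenerate configurations, so the Vandermonde/Cauchy determinant you hope for does not exist in one piece; one must choose \emph{different} moments in different regimes. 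Moreover, your guess about how to organize the cases points the wrong way. The correct dichotomy is not ``which coordinates of the four vectors coincide'' but whether the rows of the $4\times 4$ parameter matrix (the vectors $\vec\omega^{(i)}=(\theta_i^{(1)},\dots,\theta_i^{(4)})$) are affine functions of $\vec\omega^{(1)}$. The hard case is precisely when they all are --- i.e.\ when each component \emph{is} determined by a single scalar $e_r$, the situation you predict ``will not be available in general'' and hence dismiss. In that case each chosen moment becomes a fixed rational function of $e_r$, and the danger is that the resulting polynomial $g(x)$ is identically zero rather than of degree $\le 3$; ruling this out forces further subcases on the affine coefficients (in the paper's notation, whether $p_i=0$, $q_i=1$, $p_i+q_i\in\{0,1\}$), each requiring a different set of moments (e.g.\ $a_2\succ a_1\succ\text{others}$ versus $a_2\succ a_3\succ a_1\succ a_4$). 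The complementary case, where some $\vec\omega^{(i)}$ is linearly independent of $\vec 1$ and $\vec\omega^{(1)}$, already gives three independent rows for free and needs a separate argument to find a fourth.

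Two smaller issues. First, your reduction from $m\ge 5$ to $m=4$ by marginalization is not as routine as you claim: by Luce's choice axiom the marginal of a Plackett-Luce component onto a $4$-subset $S$ is the Plackett-Luce model with \emph{renormalized} parameters $\theta_i/\sum_{j\in S}\theta_j$, and four pairwise-distinct parameter vectors on $m$ alternatives can have coinciding renormalized restrictions on a given $S$ (two vectors restrict identically on $S$ iff their ratios agree on $S$), so you must argue that a suitable $S$, or a suitable combination of subsets, still certifies rank $4$. Second, your identity ``$\sum_r\alpha_r\theta_i^{(r)}=0$'' should be stated together with $\sum_r\alpha_r=0$ (coming from the all-ones row), since these are the first two rows of the reduced matrix and both are needed before the second-order moments do any work.
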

\begin{proof}
We will apply Lemma~\ref{lem:rel} to prove the theorem. That is, we will show that for all non-degenerate $\vec\theta^{(1)}, \vec\theta^{(2)},\vec\theta^{(3)},\vec\theta^{(4)}$ such that $\rank(\fmt)=4$. We recall that $\fmt$ is a $24\times 4$ matrix. Instead of proving $\rank(\fmt)=4$ directly, we will first obtain a $4\times 4$ matrix $\mathbf F^*=T\times \fmt$ by linearly combining some row vectors of $\fmt$ via a $4\times 24$ matrix $T$. Then, we show that $\rank(\mathbf{F^*})=4$, which implies that $\rank(\fmt)=4$.

For simplicity we use $[e_r,b_r,c_r,d_r]^\top$ to denote the parameter of $r$-th Plackett-Luce component for $a_1, a_2,a_3,a_4$ respectively. Namely,
$
\begin{bmatrix} \vec\theta^{(1)} & \vec\theta^{(2)} & \vec\theta^{(3)} & \vec\theta^{(4)}\end{bmatrix}=\begin{bmatrix}
e_1 & e_2 & e_3 & e_4\\
b_1 & b_2 & b_3 & b_4\\
c_1 & c_2 & c_3 & c_4\\
d_1 & d_2 & d_3 & d_4
\end{bmatrix}= \begin{bmatrix}
\vec\omega^{(1)}\\
\vec\omega^{(2)}\\
\vec\omega^{(3)}\\
\vec\omega^{(4)}
\end{bmatrix}
$, where for each $r\le 4$, $\vec\omega^{(r)}$ is a row vector. We further let $\vec 1= [1, 1, 1,1]$.

Clearly we have $\sum_{i=1}^4\vec\omega^{(i)}=\vec 1$. Therefore, if there exist three $\vec \omega$'s, for example $\{\vec\omega^{(1)},\vec\omega^{(2)},\vec\omega^{(3)}\}$, such that $\vec\omega^{(1)},\vec\omega^{(2)},\vec\omega^{(3)}$ and $\vec 1$ are linearly independent, then $\rank(\fmt)=4$ because each $\vec\omega^{(i)}$ corresponds to the probability of $a_i$ being ranked at the top, which means that $\vec\omega^{(i)}$ is a linear combination of rows in $\fmt$.
Because $\vec\theta^{(1)}, \vec\theta^{(2)},\vec\theta^{(3)},\vec\theta^{(4)}$  is non-degenerate, at least one of $\{\vec\omega^{(1)},\vec\omega^{(2)},\vec\omega^{(3)},\vec\omega^{(4)}\}$ is linearly independent of $\vec 1$. W.l.o.g.~suppose $\vec\omega^{(1)}$ is linearly independent of $\vec{1}$. This means that not all of $e_1, e_2, e_3, e_4$ are equal. The theorem will be proved in the following two cases.

\noindent\textbf{Case 1.} $\vec\omega^{(2)}$, $\vec\omega^{(3)}$, and $\vec\omega^{(4)}$ are all linear combinations of $\vec{1}$ and $\vec\omega^{(1)}$. 

\noindent{\bf Case 2.} There exists a $\vec\omega^{(i)}$ (where $i\in \{2,3,4\}$) that is linearly independent of $\vec{1}$ and $\vec\omega^{(1)}$.

\noindent\textbf{Case 1.} For all $i=2,3,4$ we can rewrite $\vec\omega^{(i)}=p_i\vec\omega^{(1)}+q_i$ for some constants $p_i, q_i$. 
More precisely, for all $r=1,2,3,4$ we have:
\begin{align}
b_r &= p_2 e_r+q_2\label{eq:cc11}\\
c_r &= p_3 e_r+q_3\label{eq:cc12}\\
d_r &= p_4 e_r+q_4\label{eq:cc13}
\end{align}
Because $\vec\omega^{(1)}+\vec\omega^{(2)}+\vec\omega^{(3)}+\vec\omega^{(4)}=\vec 1$, we have 
\begin{align}
p_2+p_3+p_4&=-1\label{eq:cond1}\\
q_2+q_3+q_4&=1\label{eq:cond2}
\end{align}
In this case for each $r\leq 4$, the $r$-th column of $\fmt$, which is $\vec f_4(\vec \theta^{(r)})$, is a function of $e_r$. Because the $\vec\theta$'s are non-degenerate, $e_1,e_2,e_3,e_4$ must be pairwise different.

We assume $p_2\neq 0$ and $q_2\neq 1$ for all subcases of {\bf Case 1} (This will be denoted as {\bf Case 1 Assumption}). The following claim shows that there exists $p_i, q_i$ where $i\in\{2,3,4\}$ satisfying this condition. If $i\neq 2$ we can switch the row of alternatives $a_2$ and $a_i$. Then the assumption holds.

\begin{claim}\label{topclaim}
There exists $i\in{2,3,4}$ which satisfy the following conditions:
\begin{itemize}
\item $q_i\neq 1$
\item $p_i\neq 0$
\end{itemize}
\end{claim}
\begin{proof}
Suppose for all $i=2,3,4$, $q_i=1$ or $p_i=0$. 

If $p_i=0$, $q_i$ must be positive because $b_r,c_r,d_r$ are all positive. If $p_i\neq 0$, Then $q_i=1$ due to the assumption above. So $q_i>0$ for all $i=2,3,4$. If there exists $i$ s.t. $q_i=1$, then \eqref{eq:cond2} does not hold. So for all $i$, $q_i\neq 1$. Then $p_i=0$ holds for all $i\in\{2,3,4\}$, which violates \eqref{eq:cond1}. 
\end{proof}

{\bf Case 1.1}: $p_2+q_2\neq 0$ and $p_2+q_2\neq 1$.

For this case we first define a $4\times 4$ matrix $\mathbf{\hat F}$ as in Table~\ref{tabfa}. 
\begin{table}[htp]
\begin{center}
\begin{tabular}{|c|c|}
\hline
$\mathbf{\hat F}$ & Moments\\
\hline
\multirow{4}{*}{
$\begin{bmatrix}
1 & 1 & 1 & 1\\
e_1 & e_2 & e_3 & e_4\\
\frac {e_1b_1} {1-b_1} & \frac {e_2b_2} {1-b_2} & \frac {e_3b_3} {1-b_3} & \frac {e_4b_4} {1-b_4}\\
\frac {e_1b_1} {1-e_1} & \frac {e_2b_2} {1-e_2} & \frac {e_3b_3} {1-e_3} & \frac {e_4b_4} {1-e_4} \end{bmatrix}$}
& $\vec 1$\\
& $a_1\succ\text{others}$\\
& $a_2\succ a_1\succ\text{others}$\\
& $a_1\succ a_2\succ\text{others}$\\
& \\
\hline
\end{tabular}
\end{center}
\caption{$\mathbf{\hat F}$.\label{tabfa}}
\end{table}%
We use $\vec{1}$ and $\vec\omega^{(1)}$ to denote the first two rows of $\hat F$.  $\vec\omega^{(1)}$ corresponds to the probability that $a_1$ is ranked at the top. We call such a probability a {\em moment}. Each moment is the sum of probabilities of some rankings. For example, the ``$a_1\succ\text{others}$'' moment is the total probability for $\{V\in{\mathcal L}(\mathcal A): a_1\text{ is ranked at the top of }V\}$. It follows that there exists a $4\times 24$ matrix $\hat T$ such that $\mathbf{\hat F}=\hat T\times  \fmt$.

Define 
\begin{align*}
\vec\theta^{(b)}&=[\frac 1 {1-b_1}, \frac 1 {1-b_2}, \frac 1 {1-b_3}, \frac 1 {1-b_4}]\\
&=[\frac 1 {1-p_2 e_1-q_2}, \frac 1 {1-p_2 e_2-q_2}, \frac 1 {1-p_2 e_3-q_2}, \frac 1 {1-p_2 e_4-q_2}]\\
\end{align*}
and
\begin{align}
\vec\theta^{(e)}&=[\frac 1 {1-e_1}, \frac 1 {1-e_2}, \frac 1 {1-e_3}, \frac 1 {1-e_4}]\label{eq:thetaa}
\end{align}
and let
$
\mathbf{F}^\ast=
\begin{bmatrix}
\vec 1\\
\vec\omega^{(1)}\\
\vec\theta^{(b)}\\
\vec\theta^{(e)}
\end{bmatrix}
$. It can be verified that $\mathbf{\hat F}=T^*\times\mathbf{F}^\ast$, where
\begin{equation*}
T^*=\begin{bmatrix}
1 & 0 & 0 & 0 \\
0 & 1 & 0 & 0 \\
-\frac 1 {p_2} & -1 & \frac {1-q_2} {p_2} & 0 \\
-(p_2+q_2) & -p_2 & 0 & p_2+q_2 \\
\end{bmatrix}
\end{equation*}
Because Case 1.1 assumes that $p_2+q_2\ne 0$ and we can select $a_2$ such that $p_2\neq 0$, $q_2\neq 1$, we have that $T^*$ is invertible. Therefore,  $\mathbf{F^*}=(T^*)^{-1}\times\mathbf{\hat F}$, which means that $\mathbf{F^*}=T\times \fmt$ for some $4\times 24$ matrix $T$.

We now prove that $\rank(\mathbf{F^*})=4$. For the sake of contradiction, suppose that $\rank(\mathbf{F^*})<4$. It follows that there exist a nonzero row vector $\vec{t}=[t_1, t_2, t_3, t_4]$, such that 
$
\vec{t}\cdot\mathbf{F}^\ast=0
$. 
This means that for all $r\leq 4$,
\begin{equation*}
t_1+t_2 e_r+\frac {t_3} {1-p_2 e_r-q_2}+\frac {t_4} {1-e_r}=0
\end{equation*} 
Let $
f(x)=t_1+t_2x+\frac {t_3} {1-p_2x-q_2}+\frac {t_4} {1-x}
$. Let $g(x)=(1-p_2x-q_2)(1-x)f(x)$. We recall that $e_1,e_2,e_3,e_4$ are four roots of $f(x)$, which means that they are also the four roots of $g(x)$. Because in Case 1.1 we assume that $p_2+q_2\neq 1$, it can be verified that not all coefficients of $g(x)$ are zero. We note that the degree of $g(x)$ is $3$. Therefore, due to the Fundamental Theorem of Algebra, $g(x)$ has at most three different roots. This means that $e_1,e_2,e_3,e_4$ are not pairwise different, which is a contradiction.

Therefore, $\rank(\mathbf{F^*})=4$, which means that $\rank(\fmt)=4$. This finishes the proof for Case 1.1. 

\noindent{\bf Case 1.2.} $p_2+q_2=1$. 

If we can find an alternative $a_i$, such that $p_i$ and $q_i$ satisfy the following conditions:
\begin{itemize}
\item $p_i\neq 0$
\item $q_i\neq 1$
\item $p_i+q_i\neq 0$
\item $p_i+q_i\neq 1$
\end{itemize}
Then we can use $a_i$ as $a_2$, which belongs to Case 1.1. Otherwise we have the following claim.
\begin{claim}\label{c12cond}
If for $i\in\{3,4\}$, $p_i$ and $q_i$ satisfy one of the following conditions
\begin{enumerate}
\item $p_i=0$
\item $p_i\neq 0$, $q_i=1$
\item $p_i+q_i=0$
\item $p_i+q_i=1$
\end{enumerate}
We claim that there exists $i\in\{3,4\}$ s.t. $p_i$, $q_i$ satisfy condition 2, namely $p_i\neq 0$, $q_i=1$. 
\end{claim}
\begin{proof}
Suppose $p_i=0$, then $q_i>0$ because $p_i e_1+q_i$ is a parameter in a Plackett-Luce component. If for $i=3,4$, $p_i$ and $q_i$ satisfy any of conditions 1, 3 or 4, then $q_i\geq -p_i$ ($q_i>0$ for condition 1, $q_i=-p_i$ for condition 3, $q_i=1-p_i>-p_1$ for condition 4). As $\sum^4_{i=2}p_i=-1$, $\sum^4_{i=2}q_i\geq1-\sum^4_{i=2}p_i=2$, which contradicts that $\sum^4_{i=2}q_i=1$. 
\end{proof}

Without loss of generality we let $p_3\neq 0$ and $q_3=1$. We now construct $\hat{\mathbf{F}}$ as is shown in the following table. 
\begin{table}[htp]
\begin{center}
\begin{tabular}{|c|c|}
\hline
$\mathbf{\hat F}$ & Moments\\
\hline
\multirow{4}{*}{
$
\begin{bmatrix}
1 & 1 & 1 & 1\\
e_1 & e_2 & e_3 & e_4\\
\frac {e_1b_1} {1-e_1} & \frac {e_2b_2} {1-e_2} & \frac {e_3b_3} {1-e_3} & \frac {e_4b_4} {1-e_4}\\
\frac {c_1b_1} {1-c_1} & \frac {c_2b_2} {1-c_2} & \frac {c_3b_3} {1-c_3} & \frac {c_4b_4} {1-c_4}\\\end{bmatrix}$}
& $\vec 1$\\
& $a_1\succ\text{others}$\\
& $a_1\succ a_2\succ\text{others}$\\
& $a_3\succ a_2\succ\text{others}$\\
& \\
\hline
\end{tabular}
\end{center}
\label{tabfb}
\end{table}%

We define $\vec\theta^{(b)}$ the same way as in {\bf Case 1.1}, and define
\begin{equation*}
\vec\theta^{(c)}=[\frac 1 {e_1}, \frac 1 {e_2}, \frac 1 {e_3}, \frac 1 {e_4}]
\end{equation*}
Define
\begin{equation*}%
\mathbf{F}^\ast=
\begin{bmatrix}
\vec 1\\
\vec\omega^{(1)}\\
\vec\theta^{(e)}\\
\vec\theta^{(c)}
\end{bmatrix}
\end{equation*}
We will show that $\mathbf{\hat F}=T^*\times\mathbf{F}^\ast$ where $T^*$ has full rank. 

For all $r=1,2,3,4$
\begin{align*}
\frac {c_rb_r} {1-c_r}&=\frac {(p_3 e_r+q_3)(p_2 e_r+q_2)} {1-p_3 e_r-q_3}=\frac {(p_3 e_r+1)(p_2 e_r+1-p_2)} {-p_3 e_r}=-p_2 e_r+(p_2-1-\frac {p_2} {p_3})-\frac {1-p_2} {p_3 e_r}
\end{align*}
So
\begin{equation*}%
\mathbf{\hat F}=
\begin{bmatrix}
\vec 1\\
\vec\omega^{(1)}\\
-\vec 1-p_2\vec\omega^{(1)}+\vec\theta^{(e)}\\
(p_2-1-\frac {p_2} {p_3})\vec 1-p_2\vec\omega^{(1)}-\frac {1-p_2} {p_3}\vec\theta^{(c)}
\end{bmatrix}
\end{equation*}

Suppose $p_2\neq 1$, we have $\mathbf{\hat F}=T^*\times\mathbf{F}^\ast$ where
\begin{equation*}
T^*=\begin{bmatrix}
1 & 0 & 0 & 0 \\
0 & 1 & 0 & 0 \\
-1 & -p_2 & 1 & 0\\
p_2-1-\frac {p_2} {p_3} & -p_2 & 0 & -\frac {1-p_2} {p_3}
\end{bmatrix}
\end{equation*}
which is full rank. So $\operatorname{rank}(\mathbf{F}^\ast)=\operatorname{rank}(\mathbf{\hat F})$. 

If rank$(\mathbf{F}^{2}_4)\leq 3$, then there is at least one column in $\mathbf{F}^2_4$ dependent of the other columns. As all rows in $\hat{\mathbf F}$ are linear combinations of rows in $\mathbf{F}^{2}_4$, there is also at least one column in $\hat{\mathbf{F}}$ dependent of the other columns. Therefore we have rank$(\mathbf{\hat F})\leq 3$. Further we have rank$(\mathbf{F}^\ast)\leq 3$.
Therefore, there exists a nonzero row vector $\vec{t}=[t_1, t_2, t_3, t_4]$, s.t. 
\begin{equation*}
\vec{t}\mathbf{F}^\ast=0
\end{equation*}
Namely, for all $r\leq 4$,
\begin{equation*}%
t_1+t_2 e_r+\frac {t_3} {1-e_r}+\frac {t_4} {e_r}=0
\end{equation*}
Let
\begin{align*}
f(x)&=t_1+t_2x+\frac {t_3} {1-x}+\frac {t_4} {x}=0\\
g(x)&=x(1-x)f(x)=x(1-x)(t_1+t_2)+t_3x+t_4(1-x)
\end{align*}
If any of the coefficients in $f(x)$ is nonzero, then $g(x)$ is a polynomial of degree at most 3. There will be a maximum of $3$ different roots. Since this equation holds for $e_r$ where $r=1,2,3,4$, there exists $s\neq t$ s.t. $e_s=e_t$. Otherwise $g(x)=f(x)=0$ for all $x$. We have
\begin{align*}
g(0)&=t_4=0\\
g(1)&=t_3=0
\end{align*}
Substitute $t_3=t_4=0$ into $f(x)$, we have $f(x)=t_1+t_2x=0$ for all $x$. So $t_1=t_2=0$. This contradicts the nonzero requirement of $\vec t$. Therefore there exists $s\neq t$ s.t. $e_s=e_t$. From \eqref{eq:cc11}\eqref{eq:cc12}\eqref{eq:cc13} we have $\vec\theta^{(s)}=\vec\theta^{(t)}$, which is a contradition.

If $p_2=1$, from the assumption of {\bf Case 1.2} $q_2=0$. So $b_r=e_r$ for $r=1,2,3,4$. Then from \eqref{eq:cond1} we have $p_4=-p_3-2$ and from \eqref{eq:cond2} we have $q_4=0$. Since $p_4$ and $q_4$ satisfy one of the four conditions in Claim \ref{c12cond}, we can show it must satisfy Condition 4. ($q_4=0$ violates Condition 2. If it satisfies Condition 1 or 3, then $p_4=0$. Then $d_r=p_4a_r+q_4=0$, which is impossible.) So $p_4=1$, and $p_3=-3$. This is the case where $\vec\omega^{(1)}=\vec\omega^{(2)}=\vec\omega^{(4)}$ and $\vec\omega^{(3)}=1-3\vec\omega^{(1)}$. For this case, we use $a_3$ as $a_1$. After the transformation, we have $\vec\omega^{(2)}=\vec\omega^{(3)}=\vec\omega^{(4)}=\frac {1-\vec\omega^{(1)}} 3$. We claim that this lemma holds for a more general case where $p_i+q_i=0$ for $i=2,3,4$. It is easy to check that $p_i=-\frac 1 3$ and $q_i=\frac 1 3$ belongs to this case.
\begin{claim}\label{case13c}
For all $r=1,2,3,4$, if
\begin{equation}\label{claim13set}
\vec\theta^{(r)}=\begin{bmatrix}e_r\\
b_r\\
c_r\\
d_r
\end{bmatrix}=\begin{bmatrix} e_r\\
p_2 e_r-p_2\\
p_3 e_r-p_3\\
-(1+p_2+p_3)e_r+(1+p_2+p_3)
\end{bmatrix}
\end{equation}
The model is identifiable.
\end{claim}
\begin{proof}
We first show a claim, which is useful to the proof.
\begin{claim}\label{condp}
Under the settings of \eqref{claim13set}, $-1<p_2,p_3<0$, $-1<p_2+p_3<0$.
\end{claim}
\begin{proof}
From the definition of Plackett-Luce model, $0<e_r,b_r,c_r,d_r<1$. From \eqref{claim13set}, we have $p_2=\frac {b_r} {e_r-1}$. Since $b_r>0$ and $e_r<1$, $p_2<0$. Similarly we have $p_3<0$ and $-(1+p_2+p_3)<0$. So $-1<p_2+p_3<0$. Then we have $p_2>-1-p_3$. So $-1-p_3<p_2<0$, $p_3>-1$. Similarly we have $p_2>-1$. 

\end{proof}

In this case, we construct $\hat{\mathbf F}$ in the following way.
\begin{table}[htp]
\begin{center}
\begin{tabular}{|c|c|}
\hline
$\mathbf{\hat F}$ & Moments\\
\hline
\multirow{4}{*}{
$\begin{bmatrix}
1 & 1 & 1 & 1\\
e_1 & e_2 & e_3 & e_4\\
\frac {e_1b_1} {1-b_1} & \frac {e_2b_2} {1-b_2} & \frac {e_3b_3} {1-b_3} & \frac {e_4b_4} {1-b_4}\\
\frac {e_1b_1c_1} {(1-b_1)(1-b_1-c_1)} & \frac {e_2b_2c_2} {(1-b_2)(1-b_2-c_2)} & \frac {e_3b_3c_3} {(1-b_3)(1-b_3-c_3)} & \frac {e_4b_4c_4} {(1-b_4)(1-b_4-c_4)}
\end{bmatrix}$}
& $\vec 1$\\
& $a_1\succ\text{others}$\\
& $a_2\succ a_1\succ\text{others}$\\
& $a_2\succ a_3\succ a_1\succ a_4$\\
& \\
\hline
\end{tabular}
\end{center}
\label{tabfc}
\end{table}%

Define $\vec\theta^{(b)}$ the same way as in {\bf Case 1.1}
\begin{align*}
\vec\theta^{(b)}&=[\frac 1 {1-b_1}, \frac 1 {1-b_2}, \frac 1 {1-b_3}, \frac 1 {1-b_4}]\\
&=[\frac 1 {1-p_2 e_1+p_2}, \frac 1 {1-p_2 e_2+p_2}, \frac 1 {1-p_2 e_3+p_2}, \frac 1 {1-p_2 e_4+p_2}]
\end{align*}
And define
\begin{align*}
\vec\theta^{(bc)}=&[\frac 1 {1-(p_2+p_3)e_1+p_2+p_3}, \frac 1 {1-(p_2+p_3)e_2+p_2+p_3}, \\
&\frac 1 {1-(p_2+p_3)e_3+p_2+p_3}, \frac 1 {1-(p_2+p_3)e_4+p_2+p_3}]
\end{align*}
Further define
\begin{equation*}%
\mathbf{F}^\ast=
\begin{bmatrix}
\vec 1\\
\vec\omega^{(1)}\\
\vec\theta^{(b)}\\
\vec\theta^{(bc)}
\end{bmatrix}
\end{equation*}
We will show $\hat{\mathbf F}=T^*\times\mathbf{F}^\ast$ where $T^*$ has full rank.

The last two rows of $\hat{\mathbf F}$
\begin{align*}
\frac {e_rb_r} {1-b_r}&=-e_r-\frac 1 {p_2}+\frac {1+p_2} {p_2(1-p_2 e_r+p_2)}\\
\frac {e_rb_rc_r} {(1-b_r)(1-b_r-c_r)}&=\frac {e_r(p_2 e_r-p_2)(p_3 e_r-p_3)} {(1-p_2 e_r+p_2)(1-(p_2+p_3)e_r+p_2+p_3)}\\
&=\frac {p_2p_3e_r(e_r-1)^2} {(1-p_2 e_r+p_2)(1-(p_2+p_3)e_r+p_2+p_3)}\\
&=\frac {p_3(2p_2+p_3)} {p_2(p_2+p_3)^2}+\frac {p_3} {p_2+p_3}e_r-\frac {(1+p_2)} {p_2(1-p_2 e_r+p_2)}\\
&+\frac {p_2(1+p_2+p_3)} {(1-(p_2+p_3)e_r+p_2+p_3)(p_2+p_3)^2}
\end{align*}
So
\begin{equation*}%
\mathbf{\hat F}=
\begin{bmatrix}
\vec 1\\
\vec\omega^{(1)}\\
-\frac 1 {p_2}\vec 1-\vec\omega^{(1)}+\frac {1+p_2} {p_2}\vec\theta^{(b)}\\
\frac {p_3(2p_2+p_3)} {p_2(p_2+p_3)^2}\vec 1+\frac {p_3} {p_2+p_3}\vec\omega^{(1)}-\frac {1+p_2} {p_2}\vec\theta^{(b)}+\frac {p_2(1+p_2+p_3)} {(p_2+p_3)^2}\vec\theta^{(bc)}
\end{bmatrix}
\end{equation*}

Then we have $\mathbf{\hat F}=T^*\times\mathbf{F}^\ast$ where
\begin{equation*}
T^*=\begin{bmatrix}
1 & 0 & 0 & 0 \\
0 & 1 & 0 & 0 \\
-\frac 1 {p_2} & -1 & \frac {1+p_2} {p_2} & 0 \\
\frac {p_3(2p_2+p_3)} {p_2(p_2+p_3)^2} & \frac {p_3} {p_2+p_3} & -\frac {1+p_2} {p_2} & \frac {p_2(1+p_2+p_3)} {(p_2+p_3)^2}\\
\end{bmatrix}
\end{equation*}
From Claim \ref{condp}, we have $-1<p_2<0$ and $-1<p_2+p_3<0$, so $\frac {1+p_2} {p_2}\neq 0$ and $\frac {p_2(1+p_2+p_3)} {(p_2+p_3)^2}\neq 0$. So $T$ has full rank. Then $\operatorname{rank}(\mathbf{F}^\ast)=\operatorname{rank}(\mathbf{\hat F})$. 

If rank$(\mathbf{F}^{2}_4)\leq 3$, then there is at least one column in $\mathbf{F}^2_4$ dependent of other columns. As all rows in $\hat{\mathbf F}$ are linear combinations of rows in $\mathbf{F}^2_4$, rank$(\mathbf{\hat F})\leq 3$. Since $\operatorname{rank}(\mathbf{F}^\ast)=\operatorname{rank}(\mathbf{\hat F})$, we have rank$(\mathbf{F}^\ast)\leq 3$. Therefore, there exists a nonzero row vector $\vec{t}=[t_1, t_2, t_3, t_4]$, s.t. 
\begin{equation*}
\vec{t}\mathbf{F}^\ast=0
\end{equation*}
Namely, for all $r\leq 4$,
\begin{equation*}%
t_1+t_2 e_r+\frac {t_3} {1-p_2a_r+p_2}+\frac {t_4} {1-(p_2+p_3)e_r+p_2+p_3}=0
\end{equation*}
Let
\begin{align*}
f(x)&=t_1+t_2x+\frac {t_3} {1-p_2x+p_2}+\frac {t_4} {1-(p_2+p_3)x+p_2+p_3}\\
g(x)&=(1-p_2x+p_2)(1-(p_2+p_3)x+p_2+p_3)(t_1+t_2x)\\
&+t_3(1-(p_2+p_3)x+p_2+p_3)+t_4(1-p_2x+p_2)
\end{align*}
If any of the coefficients of $g(x)$ is nonzero, then $g(x)$ is a polynomial of degree at most 3. There will be a maximum of $3$ different roots. As the equation holds for all $e_r$ where $r=1,2,3,4$. There exists $s\neq t$ s.t. $e_s=e_t$. Otherwise $g(x)=f(x)=0$ for all $x$. We have
\begin{align*}
g(\frac {1+p_2} {p_2})&=\frac {-t_3p_3} {p_2}=0\\
g(\frac {1+p_2+p_3} {p_2+p_3})&=\frac {t_4p_3} {p_2+p_3}=0
\end{align*}
From Claim \ref{condp} we know $p_2, p_3<0$ and $p_2+p_3<0$. So $t_3=t_4=0$. Substitute it into $f(x)$ we have $f(x)=t_1+t_2x=0$ for all $x$. So $t_1=t_2=0$. This contradicts the nonzero requirement of $\vec t$. Therefore there exists $s\neq t$ s.t. $e_s=e_t$. According to \eqref{eq:cc11}\eqref{eq:cc12}\eqref{eq:cc13} we have $\vec\theta^{(s)}=\vec\theta^{(t)}$, which is a contradition.

\end{proof}

\noindent{\bf Case 1.3.} $p_2+q_2=0$. 

If there exists $i$ such that $p_i+q_i=1$, then we can use $a_i$ as $a_2$ and the proof is done in Case 1.2. It may still be possible to find another $i$ such that $p_i,q_i$ satisfy the following two conditions:
\begin{enumerate}
\item $p_i\neq 0$ and $q_i\neq 1$;
\item $p_i+q_i\neq 0$.
\end{enumerate}
If we can find another $i$ to satisfy the two conditions, then the proof is done in Case 1.1. Then we can proceed by assuming that the two conditions are not satisfied by any $i$. We will prove that the only possibility is $p_i+q_i=0$ for $i=2,3,4$.

Suppose for $i=3,4$, $p_i$ and $q_i$ violate Condition 1. If $p_i=0$, then $q_i>0$. If at least one of them has $q_i=1$, then $e_r+b_r+c_r+d_r>1$, which is impossible. If both alternatives violates Condition 1 and $p_3=p_4=0$, then $0<q_3, q_4<1$. According to \eqref{eq:cond1} $p_2=-1$. As $p_2+q_2=0$, we have $q_2=1$. From \eqref{eq:cond2}, $q_3+q_4=2$, which is impossible. So there exists $i\in\{3,4\}$ such that $p_i+q_i=0$. Then from $\sum_i\theta^r_i=1$ we obtain the only case we left out, which is
\begin{align*}
e_r&\\
b_r&=p_2 e_r-p_2\\
c_r&=p_3 e_r-p_3\\
d_r&=-(1+p_2+p_3) e_r+(1+p_2+p_3)
\end{align*}
This case has been proved in Claim \ref{case13c}.

\noindent\textbf{Case 2}: There exists $\vec\omega^{(i)}$ that is linearly independent of $\vec{1}$ and $\vec\omega^{(1)}$. W.l.o.g. let it be
$\vec\omega^{(2)}$. Define matrix
 
\begin{equation*}
\mathbf{G}=\begin{bmatrix}
\vec 1\\
\vec\omega^{(1)}\\
\vec\omega^{(2)}
\end{bmatrix}=
\begin{bmatrix}
1 & 1 & 1 & 1\\
e_1 & e_2 & e_3 & e_4\\
b_1 & b_2 & b_3 & b_4
\end{bmatrix}
\end{equation*}
The rank of $\mathbf{G}$ is 3. Since $\mathbf G$ is constructed using linear combinations of rows in $\mathbf{F}^2_4$, the rank of $\mathbf{F}^2_4$ is at least 3. 

If $\vec\omega^{(3)}$ or $\vec\omega^{(4)}$ is independent of rows in $\mathbf{G}$, then we can append it to $\mathbf{G}$ as the fourth row so that the rank of the new matrix is $4$. Then $\mathbf F^2_4$ is full rank. So we only need to consider the case where $\vec\omega^{(3)}$ and $\vec\omega^{(4)}$ are linearly dependent of $\vec 1$, $\vec\omega^{(1)}$, and $\vec\omega^{(2)}$. Let 
\begin{align}
\vec\omega^{(3)}&=x_3\vec\omega^{(1)}+y_3\vec\omega^{(2)}+z_3\vec 1\label{cc21}\\
\vec\omega^{(4)}&=x_4\vec\omega^{(1)}+y_4\vec\omega^{(2)}+z_4\vec 1\label{cc22}
\end{align}
where $x_3+x_4=-1$, $y_3+y_4=-1$, $z_3+z_4=1$.
\begin{claim}
There exists $i\in\{3,4\}$ such that $x_i+z_i\neq 0$. 
\end{claim}
\begin{proof}
If in the current setting $\exists i\in\{3,4\}$ s.t. $x_i+z_i\neq 0$, then the proof is done. If in the current setting $x_3+z_3=x_4+z_4=0$, but $\exists i\in\{3,4\}$ s.t. $y_i+z_i=0$, then we can switch the role of $e_r$ and $b_r$, namely
\begin{align*}
\vec\omega^{(3)}&=y_3\vec\omega^{(1)}+x_3\vec\omega^{(2)}+z_3\vec 1\\
\vec\omega^{(4)}&=y_4\vec\omega^{(1)}+x_4\vec\omega^{(2)}+z_4\vec 1
\end{align*}
Then the proof is done. If for all $i\in\{3,4\}$ we have $x_i+z_i=0$ and $y_i+z_i=0$, then we switch the role of $e_r$ and $c_r$ and get
\begin{align*}
\vec\omega^{(3)}&=\frac 1 {x_3}(\vec\omega^{(1)}-y_3\vec\omega^{(2)}-z_3\vec 1)\\
\vec\omega^{(4)}&=\frac 1 {x_4}(\vec\omega^{(1)}-y_4\vec\omega^{(2)}-z_4\vec 1)
\end{align*}
If $\frac {1-z_3} {x_3}\neq 0$, namely $z_3\neq 1$, the proof is done. Suppose $z_3=1$, then $x_3=y_3=-1$. We have $\vec\omega^{(3)}=1-\vec\omega^{(1)}-\vec\omega^{(2)}$. Then $\vec\omega^{(4)}=\vec 0$, which is impossible.
\end{proof}

Without loss of generality let $x_3+z_3\neq 0$. Similar to the previous proofs, we want to construct a matrix $\mathbf{G}'$ using linear combinations of rows from $\mathbf{F}^2_4$. Let the first 3 rows for $\mathbf{G}'$ to be $\mathbf{G}$. Then rank$(\mathbf G')\geq 3$. Since rank$(\mathbf{F}^2_4)\leq 3$ and all rows in $\mathbf{G}'$ are linear combinations of rows in $\mathbf{F}^2_4$, we have rank$(\mathbf{G}')\leq 3$. So rank$(\mathbf{G}')=3$. This means that any linear combinations of rows in $\mathbf{F}^2_4$ is linearly dependent of rows in $\mathbf{G}$. 

Consider the moment where $a_1$ is ranked at the top and $a_2$ is ranked at the second position. Then $[\frac {e_1b_1} {1-e_1}, \frac {e_2b_2} {1-e_2}, \frac {e_3b_3} {1-e_3}, \frac {e_4b_4} {1-e_4}]$ is linearly dependent of $\mathbf{G}$. Adding $\vec\omega^{(2)}$ to it, we have
\begin{equation*}
\vec\theta^{(eb)}=[\frac {b_1} {1-e_1}, \frac {b_2} {1-e_2}, \frac {b_3} {1-e_3}, \frac {b_4} {1-e_4}]
\end{equation*}
which is linearly dependent of $\mathbf{G}$. 

Similarly consider the moment that $a_1$ is ranked at the top and $a_3$ is ranked at the second position. We obtain $[\frac {e_1c_1} {1-e_1}, \frac {e_2c_2} {1-e_2}, \frac {e_3c_3} {1-e_3}, \frac {e_4c_4} {1-e_4}]$. Add $\vec\omega^{(3)}$ to it, we get
\begin{equation*}
\vec\theta^{(ec)}=[\frac {c_1} {1-e_1}, \frac {c_2} {1-e_2}, \frac {c_3} {1-e_3}, \frac {c_4} {1-e_4}]
\end{equation*}
which is linearly dependent of $\mathbf{G}$. 

Recall from \eqref{eq:thetaa}
\begin{equation*}
\vec\theta^{(e)}=[\frac 1 {1-e_1}, \frac 1 {1-e_2}, \frac 1 {1-e_3}, \frac 1 {1-e_4}]
\end{equation*}

Then
\begin{align*}
\vec\theta^{(ec)}&=[\frac {x_3 e_1+y_3b_1+z_3} {1-e_1}, \frac {x_3 e_2+y_3b_2+z_3} {1-e_2}, \frac {x_3 e_3+y_3b_3+z_3} {1-e_3}, \frac {x_3 e_4+y_3b_4+z_3} {1-e_4}]\\
&=(x_3+z_3)\vec\theta^{(e)}+y_3\vec\theta^{(eb)}-x_3\vec 1
\end{align*}
Because both $\vec\theta^{(eb)}$ and $\vec\theta^{(ec)}$ are linearly dependent of $\mathbf{G}$, $\vec\theta^{(e)}$ is also linearly dependent of $\mathbf{G}$. Make it the 4th row of $\mathbf{G}'$. Suppose the rank of $\mathbf{G}'$ is still $3$. We will first prove this lemma under the assumption below, and then discuss the case where the assumption does not hold.

Assumption 1: Suppose $\vec{1}, \vec\omega^{(1)}, \vec\theta^{(e)}$ are linearly independent. 

Then $\vec\omega^{(2)}$ is a linear combination of $\vec{1}, \vec\omega^{(1)}$ and $\vec\theta^{(e)}$. We write $\vec\omega^{(2)}=s_1+s_2\vec\omega^{(1)}+s_3\vec\theta^{(e)}$ for some constants $s_1,s_2,s_3$. We have $s_3\neq 0$ because $\vec\omega^{(2)}$ is linearly independent of $\vec 1$ and $\vec\omega^{(1)}$. Elementwise, for $r=1,2,3,4$ we have

\begin{equation}\label{bex}
b_r=s_1+s_2 e_r+\frac {s_3} {1-e_r}
\end{equation} 

Let
\begin{equation*}
\mathbf{G}''=\begin{bmatrix}
\mathbf{G}\\
\vec\theta^{(eb)}
\end{bmatrix}
\end{equation*}
$\vec\theta^{(eb)}$ is linearly dependent of $\mathbf G$. There exists a non-zero vector $\vec{h}=[h_1, h_2, h_3, h_4]$ such that $\vec{h}\cdot\mathbf{G}''=0$. Namely $h_1\vec 1+h_2\vec\omega^{(1)}+h_3\vec\omega^{(2)}+h_4\vec\theta^{(eb)}=0$. Elementwise, for all $r=1,2,3,4$

\begin{equation}\label{thr}
h_1+h_2 e_r+h_3b_r+h_4\frac {b_r} {1-e_r}=0
\end{equation}
where $h_4\neq 0$ because otherwise rank$(\mathbf G)=2$. Substitute \eqref{bex} into \eqref{thr}, and multiply both sides of it by $(1-e_r)^2$, we get
\begin{align*}
(h_1+h_2 e_r+h_3b_r)(1-e_r)^2+h_4(s_1+s_2 e_r)(1-e_r)+h_4s_3=0
\end{align*}
Let
\begin{equation*}
f(x)=(h_1+h_2 e_r+h_3b_r)(1-e_r)^2+h_4(s_1+s_2 e_r)(1-e_r)+h_4s_3
\end{equation*}
We claim that not all coefficients of $x$ are zero, because $f(1)=h_4s_3\neq 0$ ($s_3\neq 0$ and $h_4\neq 0$ by assumption). Then there are a maximum of 3 different roots, each of which uniquely determines $b_r$ by (\ref{bex}). This means that there are at least two identical components. Namely $\exists s\neq t$ s.t. $\vec\theta^{(s)}=\vec\theta^{(t)}$.

If Assumption 1 does not hold, namely $\vec\theta^{(e)}$ is a linear combination of $\vec 1$ and $\vec\omega^{(1)}$, let
\begin{equation}\label{asp2}
\frac 1 {1-e_r}=p_5 e_r+q_5
\end{equation}
Define
\begin{equation*}
f(x)=\frac 1 {1-x}-p_5x-q_5
\end{equation*}
If $f(x)$ has only $1$ root or two identical roots between $0$ and $1$, then all columns of $\mathbf{G}$ have identical $e_r$-s. This means $\vec\omega^{(1)}$ is dependent of $\vec 1$, which is a contradiction. So we only consider the situation where $f(x)$ has two different roots between $0$ and $1$, denoted by $u_1$ and $u_2$ ($u_1\neq u_2$). Because $e_1, e_2, e_3, e_4$ are roots of $f(x)$, there must be at least two identical $e_r$'s, with the value $u_1$ or $u_2$. 

Substitute \eqref{asp2} into $\vec\theta^{(eb)}$, we have $\vec\theta^{(eb)}=[b_1(p_5 e_1+q_5), b_2(p_5 e_2+q_5), b_3(p_5 e_3+q_5), b_4(p_5 e_4+q_5)]$, which is linearly dependent of $\mathbf{G}$. So there exists nonzero vector $\vec{\gamma_1}=[\gamma_{11},\gamma_{12},\gamma_{13}, \gamma_{14}]$ such that
\begin{equation*}
\gamma_{11}+\gamma_{12}e_r+\gamma_{13}b_r+\gamma_{14}b_r(p_5 e_r+q_5)=0\label{sst1}
\end{equation*}
From which we get
\begin{equation}\label{exb}
(\gamma_{13}+\gamma_{14}p_5 e_r+\gamma_{14}q_5)b_r=-(\gamma_{11}+\gamma_{12}e_r)
\end{equation}
We recall that $e_r=u_1$ or $e_r=u_2$ for $r=1,2,3,4$. Since $u_1\neq u_2$, there exists $i\in\{1,2\}$ s.t. $\gamma_{13}+\gamma_{14}p_5u_i+\gamma_{14}q_5\neq 0$. W.l.o.g. let it be $u_1$. If at least two of the $e_r$'s are $u_1$, without loss of generality let $e_1=e_2=u_1$. Then using \eqref{exb} we know $b_1=b_2=\frac {-(\gamma_{11}+\gamma_{12}u_1)} {(\gamma_{13}+\gamma_{14}p_5u_1+\gamma_{14}q_5)}$. From \eqref{cc21}\eqref{cc22} we can further obtain $c_1=c_2$ and $d_1=d_2$. So $\vec\theta^{(1)}=\vec\theta^{(2)}$, which is a contradiction.

If there is only one of the $e_r$'s, which is $u_1$, w.l.o.g. let $e_1=u_1$ and $e_2=e_3=e_4=u_2$. We consider the moment where $a_2$ is ranked at the top and $a_1$ the second, which is $[\frac {e_1b_1} {1-b_1}, \frac {e_2b_2} {1-b_2}, \frac {e_3b_3} {1-b_3}, \frac {e_4b_4} {1-b_4}]$. Add $\vec\omega^{(1)}$ to it and we have $\vec\theta^{(be)}=[\frac {e_1} {1-b_1}, \frac {e_2} {1-b_2}, \frac {e_3} {1-b_3}, \frac {e_4} {1-b_4}]$, which is linearly dependent of $\mathbf{G}$. So there exists nonzero vector $\vec{\gamma_2}=[\gamma_{21},\gamma_{22},\gamma_{23}, \gamma_{24}]$ such that
\begin{equation}
\gamma_{21}+\gamma_{22}e_r+\gamma_{23}b_r+\gamma_{24}\frac {e_r} {1-b_r}=0\label{sst2}
\end{equation}
Let
\begin{align*}
f(x)&=\gamma_{21}+\gamma_{22}u_2+\gamma_{23}x+\gamma_{24}\frac {u_2} {1-x}\\
g(x)&=(1-x)f(x)=(1-x)(\gamma_{21}+\gamma_{22}u_2+\gamma_{23}x)+\gamma_{24}u_2
\end{align*}
If any coefficient of $g(x)$ is nonzero, then $g(x)$ has at most 2 different roots. As $g(x)=0$ holds for $b_2, b_3, b_4$, $\exists s\neq t$ s.t. $b_s=b_t$. Since $e_s=e_t=u_2$, from \eqref{cc21}\eqref{cc22} we know $c_s=c_t$ and $d_s=d_t$. So $\vec\theta^{(s)}=\vec\theta^{(t)}$. Otherwise we have $g(x)=f(x)=0$ for all $x$. So
\begin{equation*}
g(1)=\gamma_{24}u_2=0
\end{equation*}
Since $0<u_2<1$, we have $\gamma_{24}=0$. Substitute it into $f(x)$ we have $f(x)=\gamma_{21}+\gamma_{22}u_2+\gamma_{23}x=0$ holds for all $x$. So we have $\gamma_{21}+\gamma_{22}u_2=0$ and $\gamma_{23}=0$. Substitute $\gamma_{23}=\gamma_{24}=0$ into \eqref{sst2} we get $\gamma_{21}+\gamma_{22}e_r=0$, which holds for both $e_r=u_1$ and $e_r=u_2$. As $u_1\neq u_2$, we have $\gamma_{22}=0$. Then we have $\gamma_{21}=0$. This contradicts the nonzero requirement of $\vec{\gamma_2}$. So there exists $s\neq t$ s.t. $\vec\theta^{(s)}=\vec\theta^{(t)}$, which is a contradiction.

\end{proof}

Slightly abusing the notation, we say that a parameter of a $k$-PL is {\em identifiable}, if there does not exist a different parameter modulo label switching with the same probability distribution over the sample space. The next theorem proves that the Lebesgue measure (in the $km-1$ dimensional Euclidean space) of non-identifiable parameters of $k$-PLs over $m$ alternatives is $0$ (generic identifiability as is defined in Section \ref{sec:cont}). 
\begin{thm}\label{thm:gid}
For any $m\geq 6$ and any $1\leq k\leq\lfloor\frac {m-2} 2\rfloor!$, the $k$-PL over $m$ alternatives is generically identifiable.
\end{thm}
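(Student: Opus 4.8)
The plan is to combine the exponential-race representation of Plackett--Luce with the tensor-decomposition (Kruskal rank) technique advocated by~\citet{Allman09:Identifiability}. Recall that if $X_1,\dots,X_m$ are independent with $X_i\sim\mathrm{Exp}(\theta_i)$, then sorting the $X_i$ in increasing order produces a ranking distributed as $\Pr_{\mathrm{PL}}(\cdot\mid\vec\theta)$. The crucial structural fact is that for any partition $\mathcal A=A_1\sqcup A_2\sqcup A_3$, the induced within-group rankings $\pi_1,\pi_2,\pi_3$ on $A_1,A_2,A_3$ are \emph{mutually independent}, each being a Plackett--Luce ranking of $A_j$ with parameters $\{\theta_i:i\in A_j\}$; this is immediate because $\pi_j$ is a function of $(X_i)_{i\in A_j}$ only, and these tuples are independent. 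Hence under $k$-PL the joint law of $(\pi_1,\pi_2,\pi_3)$ --- which is a known marginal of the distribution over $\mathcal L(\mathcal A)$ --- is the three-way tensor $\sum_{r=1}^k\alpha_r\,v_1^{(r)}\otimes v_2^{(r)}\otimes v_3^{(r)}$, where $v_j^{(r)}$ is the Plackett--Luce distribution over the $|A_j|!$ rankings of $A_j$ determined by $\vec\theta^{(r)}|_{A_j}$ (equivalently by its normalization $q_j^{(r)}:=\vec\theta^{(r)}|_{A_j}/\lambda_j^{(r)}$, where $\lambda_j^{(r)}:=\sum_{i\in A_j}\theta_i^{(r)}$ and $\sum_j\lambda_j^{(r)}=1$).

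First I would fix $|A_1|=|A_2|=\ell:=\lfloor\frac{m-2}2\rfloor$ and $|A_3|=m-2\ell\in\{2,3\}$ (using $m\ge6$, so $\ell\ge2$), and let $M_j$ be the $k\times|A_j|!$ matrix whose $r$-th row is $v_j^{(r)}$. By Kruskal's theorem, in the form used by~\citet{Allman09:Identifiability}, if $\mathrm{rank}_K(M_1)+\mathrm{rank}_K(M_2)+\mathrm{rank}_K(M_3)\ge 2k+2$ then the tensor determines $\{(\alpha_r,v_1^{(r)},v_2^{(r)},v_3^{(r)})\}_r$ up to a common relabeling of $r$. Since $|A_1|!=|A_2|!=\ell!\ge k$ by hypothesis, it suffices to show that generically $\mathrm{rank}_K(M_1)=\mathrm{rank}_K(M_2)=k$ and $\mathrm{rank}_K(M_3)\ge2$; then the sum is at least $k+k+2=2k+2$. (The case $k=1$ is trivial since a single Plackett--Luce model is identifiable.) Following~\citet{Allman09:Identifiability}, the locus where $M_j$ fails to attain Kruskal rank $\min(k,|A_j|!)$ is a proper algebraic (hence measure-zero) subset of the irreducible $k$-PL parameter space provided that, for every choice of $\min(k,|A_j|!)$ columns, the corresponding minor of $M_j$ is not identically zero. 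Because the $r$-th row of $M_j$ depends only on $\vec\theta^{(r)}$, each such minor is a ``separable'' determinant, which vanishes identically iff the chosen Plackett--Luce probability functions $\vec\theta\mapsto\Pr_{\mathrm{PL}}(\sigma\mid\vec\theta)$ are linearly dependent. I would rule this out by proving that \emph{all} $|A_j|!$ such functions are linearly independent: letting $\vec\theta$ concentrate on any fixed ranking $\sigma_0$ of $A_j$ (take weights with geometrically separated magnitudes in the order $\sigma_0$) forces $\Pr_{\mathrm{PL}}(\sigma\mid\vec\theta)\to\mathbf{1}\{\sigma=\sigma_0\}$, so any linear relation must have zero coefficient on $\sigma_0$; ranging over $\sigma_0$ kills all coefficients. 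The same concentration argument gives $\mathrm{rank}_K(M_3)\ge2$ generically.

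It then remains to recover the \emph{overall} parameters $\vec\theta^{(r)}$ from the consistently-labeled data $(\alpha_r,q_1^{(r)},q_2^{(r)},q_3^{(r)})$ produced above, i.e.\ to recover the $2k$ scalars $\{\lambda_1^{(r)},\lambda_2^{(r)}\}_r$. For this I would go back to the known full distribution over $\mathcal L(\mathcal A)$ and use cross-group prefix marginals: for a prefix event $a_{i_1}\succ\cdots\succ a_{i_t}\succ(\text{rest})$ whose prefix consists of elements of $A_1$ together with a single fixed $b\in A_3$, the probability under component $r$ is an explicit rational function of the single ratio $\lambda_1^{(r)}/\lambda_3^{(r)}$ and the already-known $q$'s. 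There are at least $\ell!\ge k$ such prefixes, so generically they determine $(\lambda_1^{(r)}/\lambda_3^{(r)})_r$; repeating with $A_2$ in place of $A_1$ recovers $(\lambda_2^{(r)}/\lambda_3^{(r)})_r$, and together with $\lambda_1^{(r)}+\lambda_2^{(r)}+\lambda_3^{(r)}=1$ this pins down every $\lambda_j^{(r)}$, hence every $\theta_i^{(r)}$. Generically the recovered components are pairwise distinct and the $\alpha_r$ strictly positive, so the recovered parameter matches the paper's label-switching-quotiented notion of identifiability, and the non-identifiable parameters lie in the finite union of the proper algebraic subsets arising at each step, hence have Lebesgue measure $0$.

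The step I expect to be the main obstacle is this last gluing of the per-block normalizations: the tensor is scale-blind within each block and the third block is tiny, so all the information about the relative magnitudes $\lambda_1^{(r)}:\lambda_2^{(r)}:\lambda_3^{(r)}$ has to be squeezed out of cross-group marginals, and one must argue that the resulting \emph{nonlinear} (rational) system has a unique solution for generic parameters --- this requires a genericity/algebraic-independence argument of its own, analogous to but logically separate from the Kruskal-rank step. A secondary technical point is making ``generic Kruskal rank equals $\min(k,\kappa)$'' fully rigorous for this \emph{constrained} family of matrices (rows are Plackett--Luce vectors, not arbitrary probability vectors), which the linear-independence-via-concentration lemma above is tailored to handle.
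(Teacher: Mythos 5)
Your proposal follows the same route as the paper's proof: split $\mathcal A$ into three blocks of sizes $\lfloor\frac{m-2}{2}\rfloor$, $\lfloor\frac{m-2}{2}\rfloor$ and $m-2\lfloor\frac{m-2}{2}\rfloor$, note that the induced within-block rankings are mutually independent under a single Plackett--Luce component (the paper proves this for general random utility models via Fubini; your exponential-race argument is an equivalent, cleaner specialization), and invoke Corollary~3 of \citet{Allman09:Identifiability} with $\min(k,n_1)+\min(k,n_2)+\min(k,n_3)\ge 2k+2$. Where you diverge is that the paper stops essentially at that citation, while you go on to handle two points it leaves implicit, and both are real: (i) the generic Kruskal-rank statement in \citet{Allman09:Identifiability} is established for unconstrained conditional distributions, so on the Plackett--Luce submanifold one must separately check that the relevant minors are not identically zero --- your concentration argument does this correctly, since a $k\times k$ determinant of the form $\det[f_{\sigma_j}(\vec\theta^{(i)})]$ vanishes identically precisely when the functions $f_{\sigma_j}$ are linearly dependent, and a not-identically-zero rational function vanishes only on a measure-zero set; and (ii) Kruskal's theorem only returns the mixing weights and the three within-block marginal distributions of each component, which pin down $\vec\theta^{(r)}$ only up to the block masses $\lambda_j^{(r)}$, so a gluing step using cross-block information is genuinely needed to recover the full parameter. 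The one piece you leave as a sketch --- generic unique solvability of the nonlinear system for the $\lambda_j^{(r)}$ from cross-block prefix marginals --- is exactly the step the paper's own proof omits entirely, so your write-up is, in its accounting of what must be proved, more complete than the published argument, though that final genericity argument still has to be carried out before the theorem is fully closed.
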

\begin{proof} Let us start with a high-level description of the proof. We  will focus on the parameters whose mixing coefficients (entries of $\vec\alpha$) are pairwise different. Formally, for any $r_1, r_2\in\{1, \ldots, k\}$,
\begin{equation}\label{eq:pairwisediff}
\alpha_{r_1}=\alpha_{r_2}\implies r_1=r_2
\end{equation}
Such parameters have Lebesgue measure $1$ because the parameters with any pair of identical mixing coefficients are in a lower dimensional space. 
The generic identifiability will be proved by analyzing the uniqueness of tensor decompositions. We will construct a rank-one tensor $\mathbf T^{(r)}(\vec\theta^{(r)})$ to represent the $r$th Plackett-Luce component. Then the $k$-PL can be represented by the weighted average of tensors of its components, i.e.~$\mathbf T(\vec\theta)=\sum^k_{r=1}\alpha_r\mathbf T^{(r)}(\vec\theta^{(r)})$. We now provide a set of two conditions for $\vec\theta=(\vec\alpha, \vec\theta^{(1)}, \ldots, \vec\theta^{(k)})$ to be identifiable, and then prove that both hold generically. 
\begin{enumerate}
\item[] {\bf Condition 1.} For every $\vec\gamma = (\vec\beta, \vec\gamma^{(1)}, \ldots, \vec\gamma^{(k)})$, where $\vec\beta\ne \vec\alpha$ modulo label switching (the set of entries in $\vec\beta$ is different from the set of entries in $\vec\alpha$),  we have $\mathbf T(\vec\gamma)\ne \mathbf T(\vec\theta)$.
\item[] {\bf Condition 2.} For every $\vec\gamma = (\vec\beta, \vec\gamma^{(1)}, \ldots, \vec\gamma^{(k)})$ that is different from $\vec \theta$ and $\vec\beta = \vec\alpha$, we have $\Pr_{k\text{-PL}}(\cdot|\vec\gamma)\ne \Pr_{k\text{-PL}}(\cdot|\vec\theta)$.
\end{enumerate}
It follows from the definition of identifiability that if both conditions hold, then $\vec\theta$ is identifiable. 


\noindent{\bf Condition 1 generically holds.} We first show that $\mathbf T(\vec\theta)$ has a unique decomposition generically, then prove that the uniqueness of decomposition implies Condition 1. 

To construct the rank-one tensor  $\mathbf T^{(r)}(\vec\theta^{(r)})$, we partition the set of alternatives into three subsets. In the rest of the proof we assume that $m$ is even. The theorem can be proved similarly for odd $m$'s.
\begin{align*}
S_A&=\{a_1, a_2, \cdots, a_{\frac {m-2} 2}\}\\
S_B&=\{a_{\frac {m} 2}, a_{\frac {m+2} 2}, \cdots, a_{m-2}\}\\
S_C&=\{a_{m-1}, a_{m}\}
\end{align*}
There are $n_1=n_2=\frac {m-2} 2!$ rankings over $S_A$ and $S_B$ respectively, and two rankings over $S_C$ ($n_3=2$). Let the three coordinates  in the tensor $\mathbf T^{(r)}(\vec\theta^{(r)})$ be $\mathbf p^{(r)}_A, \mathbf p^{(r)}_B, \mathbf p^{(r)}_C$  that represent probabilities of all rankings restricted to $S_A,S_B,S_C$  respectively. We note that $\mathbf p^{(r)}_A, \mathbf p^{(r)}_B, \mathbf p^{(r)}_C$ are functions of $\vec\theta^{(r)}$, which are often omitted. Let $\mathbf p(j)$ denote the $j$th entry of $\mathbf p$. Then for all $r=1, \ldots, k$, we have $\sum^{n_1}_{j=1}\mathbf p^{(r)}_A(j)=\sum^{n_2}_{j=1}\mathbf p^{(r)}_A(j)=\sum^{n_3}_{j=1}\mathbf p^{(r)}_A(j)=1$.

Then, for any rankings $V_A\in {\mathcal L}(S_A)$, $V_B\in {\mathcal L}(S_B)$, and $V_C\in {\mathcal L}(S_C)$, we can prove that $\Pr_\text{PL}(V_A,V_B, V_C|\vec\theta^{(r)})=\Pr_\text{PL}(V_A|\vec\theta^{(r)})\times \Pr_\text{PL}(V_B|\vec\theta^{(r)})\times \Pr_\text{PL}(V_C|\vec\theta^{(r)})$. That is, $V_A$, $V_B$ and $V_C$ are independent given $\vec\theta^{(r)}$. We will prove this result for a more general class of models called random utility models~\citep{Thurstone27:Law}, of which the Plackett-Luce model is a special case. 

\begin{claim}\label{claim:ind}
Given a random utility model (RUM) $\mathcal{M}(\vec\theta)$ over a set of $m$ alternatives $\mathcal A$, let $\mathcal A_1, \mathcal A_2$ be two non-overlapping subsets of $\mathcal A$, namely $\mathcal A_1, \mathcal A_2\subset\mathcal A$ and $\mathcal A_1\cap\mathcal A_2=\emptyset$. Let $V_1, V_2$ be rankings over $\mathcal A_1$ and $\mathcal A_2$, respectively, then we have
$\Pr(V_1, V_2|\vec\theta)=\Pr(V_1|\vec\theta)\Pr(V_2|\vec\theta).$
\end{claim}

\begin{proof} In an RUM, given a ground truth utility $\vec\theta=[\theta_1, \theta_2, \ldots, \theta_m]$ and a distribution $\mu_i(\cdot|\theta_i)$ for each alternative, an agent samples a random utility $X_i$ for each alternative independently with probability density function $\mu_i(\cdot|\theta_i)$. The probability of the ranking $a_{i_1}\succ a_{i_2}\succ\cdots\succ a_{i_m}$ is
\begin{align*}
&\Pr(a_{i_1}\succ\cdots\succ a_{i_m}|\vec\theta)=\Pr(X_{i_1}>X_{i_2}>\cdots>X_{i_m})\\
&=\int^\infty_{-\infty}\int^\infty_{x_{i_m}}\cdots\int^\infty_{x_{i_2}}\mu_{i_m}(x_{i_m})\mu_{i_{m-1}}(x_{i_{m-1}})\ldots\mu_{i_1}(x_{i_1})dx_{i_1}dx_{i_2}\ldots dx_{i_m}
\end{align*}

W.l.o.g.~we let $i_1=1,\ldots,i_m=m$. 
Let $\mathcal S_{X_{1}>X_{2}>\cdots>X_{m}}$ denote the subspace of ${\mathbb R}^m$ where $X_{1}>X_{2}>\cdots>X_{m}$  and let $\mu(\vec x|\vec\theta)$ denote $\mu_{m}(x_{m})\mu_{{m-1}}(x_{{m-1}})\ldots\\ \mu_{1}(x_{1})$. Thus we have
\begin{equation*}
\Pr(a_{1}\succ\cdots\succ a_{m}|\vec\theta)=\int_{\mathcal S_{X_{1}>X_{2}>\cdots>X_{m}}}\mu(\vec x|\vec\theta)d\vec x
\end{equation*}

We first prove the following claim.

\begin{claim}\label{claim:partial}
Given a random utility model $\mathcal M(\vec\theta)$, for any parameter $\vec\theta$ and any $\mathcal A_s\subseteq\mathcal A$, we let $\vec\theta_s$ denote the components of $\vec\theta$ for alternatives in $\mathcal A_s$, and let $V_s$ be a full ranking over $A_s$ (which is a partial ranking over $\mathcal A$). Then we have $
\Pr(V_s|\vec\theta)=\Pr(V_s|\vec\theta_s).$
\end{claim}

\begin{proof}
Let $m_s$ be the number of alternatives in $\mathcal A_s$. Let $\mathcal S_{X_{1}>X_{2}>\cdots>X_{m_s}}$ denote the subspace of ${\mathbb R}^{m_s}$ where $X_{1}>X_{2}>\cdots>X_{m_s}$. W.l.o.g.~let $V_s$ be
$a_{1}\succ a_{2} \cdots\succ a_{m_s}$. Then we have
\begin{align*}
&\Pr(V_s|\vec\theta)=\int_{\mathcal S_{X_{1}>X_{2}>\cdots>X_{m_s}}\times{\mathbb R}^{m-m_s}}\mu(\vec x|\vec\theta)d\vec x\\
&=\int^\infty_{-\infty}\int^\infty_{x_{m_s}}\cdots\int^\infty_{x_2}\int^\infty_{-\infty}\cdots\int^\infty_{-\infty}\mu_{m}(x_{m})\ldots\mu_{1}(x_{1})dx_{m_s+1}\cdots dx_{m}dx_{1}\ldots dx_{m_s}\\
&=\int^\infty_{-\infty}\int^\infty_{x_{m_s}}\cdots\int^\infty_{x_2}\mu_{m_s}(x_{m_s})\mu_{m_s-1}(x_{m_s-1})\ldots\mu_{1}(x_{1})dx_{1}dx_{2}\ldots dx_{m_s}\\
&=\int_{\mathcal S_{X_{1}>X_{2}>\cdots>X_{m_s}}}\mu(\vec x_s|\vec\theta_s)d\vec x=\Pr(V_s|\vec\theta_s)
\end{align*}
\end{proof}

Let $\mathcal A_1=\{a_{11}, a_{12}, \ldots, a_{1m_1}\}$ and $\mathcal A_2=\{a_{21}, a_{22}, \ldots, a_{2m_2}\}$. Without loss of generality we let $V_1$ and $V_2$ be $a_{11}\succ a_{12}\succ\cdots\succ a_{1m_1}$ and $a_{21}\succ a_{22}\succ\cdots\succ a_{2m_2}$ respectively. For any $\vec\theta$, let $\vec\theta_1$ denote the subvector of $\vec\theta$ on ${\mathcal A}_1$. Let ${\mathcal S}_1$ denote ${\mathcal S}_{X_{11}>X_{12}>\cdots>X_{1m_1}}$.  $\vec\theta_2$ and ${\mathcal S}_2$ are defined similarly. According to Claim~\ref{claim:partial}, we have
$
\Pr(V_1|\vec\theta)=\Pr(V_1|\vec\theta_1)=\int_{\mathcal S_1}\mu(\vec x_1|\vec\theta_1)d\vec x_1$ and $
\Pr(V_2|\vec\theta)=\Pr(V_2|\vec\theta_2)=\int_{\mathcal S_2}\mu(\vec x_2|\vec\theta_2)d\vec x_2
$. Then we have
\begin{align*}
\Pr(V_1,V_2|\vec\theta)&=\int_{\mathcal S_1\times \mathcal S_2\times{\mathbb R}^{m-m_1-m_2}}\mu(\vec x|\vec\theta)d\vec x\\
&=\int_{\mathcal S_1\times \mathcal S_2}\mu(\vec x_1,\vec x_2|\vec\theta_1,\vec\theta_2)d\vec x&\text{(Claim~\ref{claim:partial})}\\
&=\int_{\mathcal S_1}\int_{\mathcal S_2}\mu(\vec x_1|\vec\theta_1)\mu(\vec x_2|\vec\theta_2)d\vec x_1d\vec x_2&\text{(Fubini's Theorem)}\\
&=\int_{\mathcal S_1}\mu(\vec x_1|\vec\theta_1)d\vec x_1\int_{\mathcal S_2}\mu(\vec x_2|\vec\theta_2)d\vec x_2\\
&=\Pr(V_1|\vec\theta_1)\Pr(V_2|\vec\theta_2)
\end{align*}

\end{proof}

Because $S_A$, $S_B$, and $S_C$ are non-overlapping, by Claim~\ref{claim:ind}, it follows that
$$\mathbf T^{(r)}(\vec\theta^{(r)})={\mathbf p^{(r)}_A}\otimes{\mathbf p^{(r)}_B}\otimes{\mathbf p^{(r)}_C}.$$
We will prove the tensor $\mathbf T(\vec\theta)=\sum^k_{r=1}\alpha_r\mathbf T^{(r)}(\vec\theta^{(r)})$ has a unique decomposition by Kruskal's theorem \cite[Theorem~4a in][]{Kruskal77:Three}. To this end, we analyze the Kruskal's rank of three matrices, defined as follows.
$
\mathbf P_A = 
\begin{bmatrix}
\mathbf p^{(1)}_A & \ldots & \mathbf p^{(k)}_A
\end{bmatrix}
$,
$
\mathbf P_B= 
\begin{bmatrix}
\mathbf p^{(1)}_B & \ldots & \mathbf p^{(k)}_B
\end{bmatrix}
$
and
$
\mathbf P_C= 
\begin{bmatrix}
\mathbf p^{(1)}_C & \ldots & \mathbf p^{(k)}_C
\end{bmatrix}
$.
The Kruskal's rank of a matrix is the maximum number $K$ s.t. any $K$ columns of the matrix are linearly independent. Specifically, if a matrix has full column rank, then the Kruskal's rank of the matrix equals to the number of columns. Let $\krank(\mathbf A)$ denote the Kruskal's rank of matrix $\mathbf A$. We prove the following lemma, which states that when $k\leq\lfloor\frac {m-2} 2\rfloor!$, $\mathbf P_A$ and $\mathbf P_B$ generically have full (column) Kruskal's rank.

\begin{claim}\label{claim:genericfullrank}
Let $S$ be a set of $m_0$ alternatives. Define $\mathbf P_S = \begin{bmatrix}
\mathbf p^{(1)}_S & \ldots & \mathbf p^{(k)}_S\end{bmatrix}$, where each column $\mathbf p^{(r)}_S$ is an $m_0!$-dimensional column vector with probabilities of all full rankings over alternatives in $S$ given the $r$th Plackett-Luce component. If $k\le m_0!$, then $\krank(\mathbf P_S)=k$ generically holds.
\end{claim}

\begin{proof} It suffices to prove when $k=m_0!$, $\mathbf P_S$ has full rank generically. We will prove $\det(\mathbf P_S)\neq 0$ generically holds. Since $\det(\mathbf P_S)$ can be represented as a polynomial fraction with a nonzero denominator, we only need to prove there exist $k=m_0!$ Plackett-Luce components s.t. the numerator is nonzero ($\det(P_S)\neq 0$). This implies that the polynomial in the numerator is not identically $0$, and thus the Lebesgue measure of parameters resulting in a zero numerator is zero~\citep{Caron05:Zero}.

We prove by construction that there exist $\vec\theta^{(1)}, \ldots, \vec\theta^{(k)}$ s.t $\det(\mathbf P_S)\ne 0$. We will construct $k$ Plackett-Luce components s.t. $P_S$ is a diagonally dominant matrix, which implies $\det(\mathbf P_S)\ne 0$. To this end, we first show that given any ranking $V_0$, we can construct a parameter $\vec\theta$ s.t. $\Pr_{\vec\theta}(V_0)>1/2$. W.l.o.g. we define $V_0=a_1\succ a_2\succ\ldots\succ a_{m_0}$. Let $\sqrt[m_0-1]{1/2}<C<1$ be a constant. Then we let $\theta_1 = C, \theta_2 = C(1-\theta_1), \theta_3 = C(1-\theta_1-\theta_2), \ldots, \theta_{m_0}=1-\sum^{m_0-1}_{i=1}\theta_i$. It follows that $\Pr_{\vec\theta}(V_0) = C^{m_0-1}>1/2$. 

Recall that each column of $\mathbf P_S$ is one Plackett-Luce component and each row corresponds to a ranking over $S$. For each diagonal entry $(r, r)$ of $\mathbf P_S$, there is a corresponding ranking (denoted by $V_r$). Given the ranking $V_r$, we construct $\vec\theta^{(r)}$ using the method described above s.t. $\Pr_{\vec\theta^{(r)}}(V_r)>1/2$. Because entries of each column in $\mathbf P_S$ sum up to $1$, $\mathbf P^\top_S$ is a strictly diagonally dominant matrix. By Levy-Desplanques theorem, we have $\det(\mathbf P_S)\neq 0$.
\end{proof}

\cite{Kruskal77:Three} proved that if $\krank(\mathbf P_A)+\krank(\mathbf P_B)+\krank(\mathbf P_C)\ge 2k+2$, then $\mathbf T$ has a unique decomposition. By Claim~\ref{claim:genericfullrank}, when $k\leq\lfloor\frac {m-2} 2\rfloor!$, $\krank(\mathbf P_A)=\krank(\mathbf P_B)=k$ and $\krank(\mathbf P_C)=2$ generically hold. Thus, generically, tensor $\mathbf T(\vec\theta)$ has a unique decomposition.

We now prove that uniqueness of decomposition of $\mathbf T(\vec\theta)$ implies Condition 1, formally shown as the following claim.
\begin{claim}\label{claim:cond1}
For any $\vec\theta$ where $\mathbf T(\vec\theta)$ has a unique decomposition, Condition 1 holds. That is, for every $\vec\gamma = (\vec\beta, \vec\gamma^{(1)}, \ldots, \vec\gamma^{(k)})$, where $\vec\beta\ne \vec\alpha$ modulo label switching (the set of entries in $\vec\beta$ is different from the set of entries in $\vec\alpha$),  we have $\mathbf T(\vec\gamma)\ne \mathbf T(\vec\theta)$.
\end{claim}
\begin{proof}
Suppose for the purpose of contradiction, the decomposition of $\mathbf T(\vec\theta)$ is unique but Condition 1 does not hold. This means that there exists $\vec\gamma=(\vec\beta, \vec\gamma^{(1)}, \ldots,\\ \vec\gamma^{(k)})$ where $\vec\beta$ is not equal to $\vec\alpha$ modulo label switching, s.t. $\mathbf T(\vec\gamma)\ne \mathbf T(\vec\theta)$. Because components in $\vec\alpha$ are pairwise different, there exists $r_1\leq k$ such that $\alpha_{r_1}$is different from any element in $\vec\beta$, while $\mathbf T(\vec\theta)=\mathbf T(\vec\gamma)=\mathbf T$. We will show that for any $r_2=1, \ldots, k$, we have $\alpha_{r_1}\mathbf T^{(r_1)}(\vec\theta^{(r_1)})\neq\beta_{r_2}\mathbf T^{(r_2)}(\vec\gamma^{(r_2)})$, which contradicts the unique decomposition of $\mathbf T$. Recall that for any $r$, we have $\mathbf T^{(r)}(\cdot)={\mathbf p^{(r)}_A}\otimes{\mathbf p^{(r)}_B}\otimes{\mathbf p^{(r)}_C}$ and $\sum^{n_1}_{j=1}\mathbf p^{(r)}_A(j)=\sum^{n_2}_{j=1}\mathbf p^{(r)}_A(j)=\sum^{n_3}_{j=1}\mathbf p^{(r)}_A(j)=1$. The sum of all entries of $\mathbf T^{(r)}(\cdot)$ is one because
\begin{align*}
\sum^{n_1}_{j_1=1}\sum^{n_2}_{j_2=1}\sum^{n_3}_{j_3=1}\mathbf p^{(r)}_A(j_1)\mathbf p^{(r)}_B(j_2)\mathbf p^{(r)}_C(j_3)&=\sum^{n_1}_{j_1=1}\sum^{n_2}_{j_2=1}\mathbf p^{(r)}_A(j_1)\mathbf p^{(r)}_B(j_2)\sum^{n_3}_{j_3=1}\mathbf p^{(r)}_C(j_3)\\
&=\sum^{n_1}_{j_1=1}\mathbf p^{(r)}_A(j_1)\sum^{n_2}_{j_2=1}\mathbf p^{(r)}_B(j_2)=1.
\end{align*}

Thus, for any $r$, we have
\begin{equation}\label{eq:alphar}
\sum^{n_1}_{j_1=1}\sum^{n_2}_{j_2=1}\sum^{n_3}_{j_3=1}(\alpha_r\mathbf T^{(r)}(\cdot))_{j_1j_2j_3}=\alpha_r
\end{equation}

So the sums of all entries of $\alpha_{r_1}\mathbf T^{(r_1)}(\vec\theta^{(r_1)})$ and $\beta_{r_2}\mathbf T^{(r_2)}(\vec\gamma^{(r_2)})$ are $\alpha_{r_1}$ and $\beta_{r_2}$ respectively. We recall that for all $r_2$, we have $\alpha_{r_1}\neq\beta_{r_2}$. Therefore, $\alpha_{r_1}\mathbf T^{(r_1)}(\vec\theta^{(r_1)})\\ \neq\beta_{r_2}\mathbf T^{(r_2)}(\vec\gamma^{(r_2)})$, which is a contradiction. 
\end{proof}

This finishes the proof that {\bf Condition 1 generically holds}.

\noindent{\bf Condition 2 generically holds.} Unfortunately the tensor decomposition technique used for Condition 1 no longer works for Condition 2. This is because due to the partition of alternatives, the parameter of alternatives in one group can be arbitrarily scaled while the distribution of partial rankings restricted to the group does not change. For example, given $\vec\theta = (\vec\alpha, \vec\theta^{(1)}, \ldots, \vec\theta^{(k)})$, we can construct $\vec\gamma = (\vec\alpha, \vec\gamma^{(1)}, \ldots, \vec\gamma^{(k)})$ s.t. $\mathbf T(\vec\gamma)=\mathbf T(\vec\theta)$ in the following way. For any $r$, we let $\gamma^{(r)}_i=2\theta^{(r)}_i$ for $i=1, \ldots, m-2$ and $\gamma^{(r)}_i=\theta^{(r)}_i$ for $i=m-1, m$. Then we normalize each $\vec\gamma^{(r)}$ s.t. $\sum^m_{i=1}\gamma^{(r)}_i=1$. The resulting tensor $\mathbf T({\vec\gamma})=\mathbf T({\vec\theta})$ because the probability of rankings restricted to each group are exactly the same.

To address this problem we consider an additional tensor decomposition using a different partition of the alternatives, defined as follows: $
S'_A=\{a_3, a_5, \ldots, a_{m-1}\}, \\
S'_B=\{a_{2}, a_{4}, \ldots, a_{m-2}\},
S'_C=\{a_1, a_m\}$. Let $\mathbf T'(\vec\theta)$ denote the tensor under this partition. 
Similar to the proof for Condition 1, we can show that generically $\mathbf T'(\vec\theta)$ has a unique decomposition. Next, we will prove that for any $\vec\theta$ where $\mathbf T(\vec\theta)$ and $\mathbf T'(\vec\theta)$ have unique decompositions (which holds generically), Condition 2 holds.   

Suppose for the sake of contradiction that Condition 2 does not hold for $\vec\theta$ where $\mathbf T(\vec\theta)$ and $\mathbf T'(\vec\theta)$ have unique decompositions. Then, there exists $\vec\gamma=(\vec\beta, \vec\gamma^{(1)}, \ldots, \\ \vec\gamma^{(k)})$ that is different from $\vec\theta$ modulo label switching, such that $\Pr_{k\text{-PL}}(\cdot|\vec\gamma)=\Pr_{k\text{-PL}}(\cdot|\vec\theta)$. This means that $\mathbf T(\vec\gamma)=\mathbf T(\vec\theta)$ and  $\mathbf T'(\vec\gamma)=\mathbf T'(\vec\theta)$.

We first match the components in $\vec\gamma$ and $\vec\theta$. Recall from Claim~\ref{claim:cond1} that uniqueness of $\mathbf T(\vec\theta)$ implies Condition 1. Because both $\mathbf T(\vec\theta)$ and $\mathbf T'(\vec\theta)$ have unique decompositions, Condition 1 must hold for both. It follows that the entries of $\vec\beta$ are exactly entries of $\vec\alpha$, otherwise $\mathbf T(\vec\theta)\ne \mathbf T(\vec\gamma)$, which is a contradiction. Since entries of $\vec\alpha$ are pairwise different, there is a unique way of matching components in $\vec\theta$ to components in $\vec\gamma$ by matching the corresponding mixing coefficients. Consequently, we can relabel components in $\vec\gamma$ s.t.~the $\vec\beta$ becomes exactly $\vec\alpha$. Let $\vec\gamma'=(\vec\alpha, \vec\gamma'^{(1)}, \ldots, \vec\gamma'^{(k)})$ denote the $\vec\gamma$ parameter after relabeling the components. Thus we have $\mathbf T(\vec\gamma')=\mathbf T(\vec\gamma)=\mathbf T(\vec\theta)$ and $\mathbf T'(\vec\gamma')=\mathbf T'(\vec\gamma)=\mathbf T'(\vec\theta)$. Because $\vec\gamma\ne \vec\theta$ modulo label switching, we have $\vec\gamma'\ne\vec\theta$, which implies that there exists $r^*$ s.t.
\begin{equation}\label{eq:neq}
\vec\gamma'^{(r^*)}\neq\vec\theta^{(r^*)}.
\end{equation}

Next, we will show that from the uniqueness of decomposition of $\mathbf T(\vec\theta)$ and $\mathbf T'(\vec\theta)$, we must have $\vec\gamma'^{(r^*)}=\vec\theta^{(r^*)}$, which is a contradiction. To this end, we show how to uniquely determine the parameters of $k$-PL (i.e.~$\vec\gamma'^{(r^*)}$ and $\vec\theta^{(r^*)}$) from decompositions of $\mathbf T(\vec\theta)$ and $\mathbf T'(\vec\theta)$.

$\mathbf T(\vec\gamma')=\mathbf T(\vec\theta)$ has a unique decomposition means that for any $r_1\in\{1, \ldots, k\}$, there exists $r_2\in\{1, \ldots, k\}$ s.t. $\alpha_{r_1}\mathbf T^{(r_1)}(\vec\gamma'^{(r_1)})=\alpha_{r_2}\mathbf T^{(r_2)}(\vec\theta^{(r_2)})$, which implies 
$$\sum^{n_1}_{j_1=1}\sum^{n_2}_{j_2=1}\sum^{n_3}_{j_3=1}(\alpha_{r_1}\mathbf T^{(r_1)}(\vec\gamma'^{(r_1)}))_{j_1j_2j_3}=\sum^{n_1}_{j_1=1}\sum^{n_2}_{j_2=1}\sum^{n_3}_{j_3=1}(\alpha_{r_2}\mathbf T^{(r_2)}(\vec\theta^{(r_2)}))_{j_1j_2j_3}$$

By \eqref{eq:alphar}, the above equation implies $\alpha_{r_1}=\alpha_{r_2}$. It follows from \eqref{eq:pairwisediff} that $r_1=r_2$. Namely, we have $\alpha_r\mathbf T^{(r)}(\vec\gamma'^{(r)})=\alpha_r\mathbf T^{(r)}(\vec\theta^{(r)})$ for all $r=1, \ldots, k$. It follows that for all $r=1, \ldots, k$, we have $\mathbf T^{(r)}(\vec\gamma'^{(r)})=\mathbf T^{(r)}(\vec\theta^{(r)})$, which means $\mathbf T^{(r^*)}(\vec\gamma'^{(r^*)})=\mathbf T^{(r^*)}(\theta^{(r^*)})$. Similarly, we also have $\mathbf T'^{(r^*)}(\vec\gamma'^{(r^*)})=\mathbf T'^{(r^*)}(\theta^{(r^*)})$. 

Recall that $n_1, n_2$ and $n_3$ are numbers of rankings over alternatives in  $S_A$, $S_B$, and $S_C$ respectively and for any $r=1, \ldots, k$, $\mathbf T^{(r)}(\vec\theta)={\mathbf p^{(r)}_A}\otimes{\mathbf p^{(r)}_B}\otimes{\mathbf p^{(r)}_C}$ is a rank-one tensor where $\sum^{n_1}_{j=1}\mathbf p^{(r)}_A(j)=\sum^{n_2}_{j=1}\mathbf p^{(r)}_B(j)=\sum^{n_3}_{j=1}\mathbf p^{(r)}_C(j)=1$. Given $\mathbf T^{(r)}(\vec\theta)$, ${\mathbf p^{(r)}_A}$, ${\mathbf p^{(r)}_B}$ and ${\mathbf p^{(r)}_C}$ can be easily obtained by normalizing the corresponding entries of $\mathbf T^{(r)}(\vec\theta)$. E.g., ${\mathbf p^{(r)}_A}$ can be obtained by normalizing all $(\cdot, 1, 1)$ entries. We claim that $\mathbf p^{(r)}_A$ is uniquely determined by $\mathbf T^{(r)}(\vec\theta^{(r)})$ because (i) $\mathbf p^{(r)}_A$ is proportional to $(\cdot, 1, 1)$ entries because $\mathbf T^{(r)}(\vec\theta)={\mathbf p^{(r)}_A}\otimes{\mathbf p^{(r)}_B}\otimes{\mathbf p^{(r)}_C}$; and (ii) $\sum^{n_1}_{j=1}\mathbf p^{(r)}_A(j)=1$. Specifically $\mathbf p^{(r^*)}_A$ is uniquely determined by $\mathbf T^{(r^*)}(\vec\theta^{(r^*)})$.
Similarly $\mathbf p^{(r^*)}_B$ and $\mathbf p^{(r^*)}_C$ are uniquely determined by $\mathbf T^{(r^*)}(\vec\theta^{(r^*)})$ and $\mathbf p'^{(r^*)}_A, \mathbf p'^{(r^*)}_B$ and $\mathbf p'^{(r^*)}_C$ are uniquely determined by $\mathbf T'^{(r^*)}(\vec\theta^{(r^*)})$.

Next, we will prove $\mathbf p^{(r^*)}_A, \mathbf p^{(r^*)}_B, \mathbf p^{(r^*)}_C, \mathbf p'^{(r^*)}_A, \mathbf p'^{(r^*)}_B$ and $\mathbf p'^{(r^*)}_C$ uniquely determines $\vec\theta^{(r^*)}$, which contradicts \eqref{eq:neq}. Now we focus on $\vec\gamma'^{(r^*)}$ and $\vec\theta^{(r^*)}$ restricted to $S_A$. We claim that there exists a constant $C_A$ s.t. for all $i = 1, 2, \ldots, \frac {m-2} 2$ (i.e. $a_i\in S_A$), $\gamma'^{(r^*)}_i=C_A\theta^{(r^*)}_i$. The reason is as follows.

For all $i=1,\ldots, \frac {m-2} 2$, we consider the probability of the event that $a_i$ is ranked at the top among the alternatives in $S_A$ given $\vec\theta^{(r^*)}$ and $\vec\gamma'^{(r^*)}$. By Claim~\ref{claim:partial}, the two probabilities must be the same. Therefore for all $i=1,\ldots, \frac {m-2} 2$, we have 
\begin{align*}
\Pr(a_i\text{ top among } S_A|\vec\theta^{(r^*)})&=\frac {\theta^{(r^*)}_i} {\sum^{\frac {m-2} 2}_{i=1}\theta^{(r^*)}_i}\\
\Pr(a_i\text{ top among } S_A|\vec\gamma'^{(r*)})&=\frac {\gamma'^{(r^*)}_i} {\sum^{\frac {m-2} 2}_{i=1}\gamma'^{(r^*)}_i}
\end{align*}
This means there exists a constant $C_A=\frac {\sum^{\frac {m-2} 2}_{i=1}\gamma'^{(r^*)}_i} {\sum^{\frac {m-2} 2}_{i=1}\theta^{(r^*)}_i}$ s.t. 
\begin{equation}\label{t1}
\gamma'^{(r^*)}_i=C_A\theta^{(r^*)}_i, \text{ for } i=1, \ldots, \frac {m-2} 2
\end{equation}
Similarly there exist $C_B, C_C$, s.t.
\begin{align}
\gamma'^{(r^*)}_i &= C_B\theta^{(r^*)}_i, \text{ for } i=\frac m 2, \ldots, m-2\label{t2}\\
\gamma'^{(r^*)}_i &= C_C\theta^{(r^*)}_i, \text{ for } i= m-1, m\label{t3}
\end{align}
Similarly from $\mathbf T'^{(r)}(\vec\gamma^{(r)})=\mathbf T'^{(r)}(\vec\theta^{(r)})$, there exist constants $C'_A, C'_B$ and $C'_C$ s.t. 
\begin{align}
\gamma'^{(r^*)}_i &= C'_A\theta^{(r^*)}_i, \text{ for } i=3, 5, \ldots, m-1\label{tp1}\\
\gamma'^{(r^*)}_i &= C'_B\theta^{(r^*)}_i, \text{ for } i=2, 4, \ldots, m-2\label{tp2}\\
\gamma'^{(r^*)}_i &= C'_C\theta^{(r^*)}_i, \text{ for } i= 1, m\label{tp3}
\end{align}
Therefore, we have $C_B=C'_B$ by Equations~\eqref{t2} and \eqref{tp2} (let $i=m-2$); $C'_B=C_A$ by Equations~\eqref{tp2} and \eqref{t1} (let $i=2$); $C_A=C'_C$ by Equations~\eqref{t1} and \eqref{tp3} (let $i=1$); $C'_C=C_C$ by Equations~\eqref{tp3} and \eqref{t3} (let $i=m$); $C_C=C'_A$ by Equations~\eqref{t3} and \eqref{tp1} (let $i=m-1$). So we let $C_B=C'_B=C_A=C'_C=C_C=C'_A=C$. Then for all $i=1, \ldots, m$, we have $\gamma'^{(r^*)}_i=C\theta^{(r^*)}_i$. Because $\sum^m_{i=1}\gamma'^{(r^*)}_i=\sum^m_{i=1}\theta^{(r^*)}_i=1$, we have $C=1$ and $\vec\gamma'^{(r^*)}=\vec\theta^{(r^*)}$, which is a contradiction.

As we have proved, the Lebesgue measure of parameters where tensor $\mathbf T(\vec\theta)$ (or $\mathbf T'(\vec\theta)$) has a nonunique decomposition is $0$. Therefore, the Lebesgue measure of parameters $\vec\theta$ where both $\mathbf T(\vec\theta)$ and $\mathbf T'(\vec\theta)$ have unique decompositions is also $0$. This finishes the proof.
\end{proof}

\begin{figure*}[htp]
\centerline{\includegraphics[width=\textwidth]{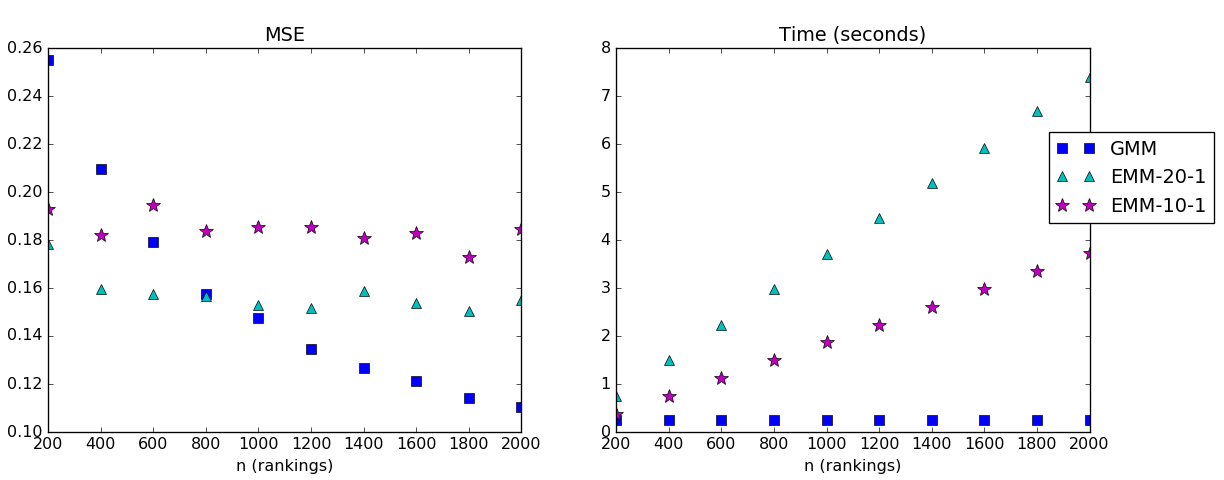}}
\caption{The MSE and running time of GMM and EMM.  EMM-20-1 is 20 iterations of EMM overall with 1 iteration of MM for each M step.  Likewise, EMM-10-1 is 10 iterations of EMM with 1 iteration of MM each time.  Values are calculated over 2000 datasets.}
\label{fig:gmmemm}
\end{figure*}

\section{A Generalized Method of Moments Algorithm for $2$-PL}

In a {\em generalized method of moments} (GMM) algorithm, a set of $q\ge 1$ {\em moment conditions} $g(V,\vec\theta)$ are specified. Moment conditions are functions of the parameter and the data, whose expectations are zero at the ground truth. $g(V,\vec\theta)\in{\mathbb R}^q$ has two inputs: a data point $V$ and a parameter $\vec\theta$. For any $\vec\theta^*$, the expectation of any moment condition should be zero at $\vec\theta^*$, when the data are generated from the model given $\vec\theta^*$. Formally $E[g(V, \vec\theta^\ast)]=\vec 0$. In practice the observed moment values should match the theoretical values from the model. In our algorithm, each moment condition corresponds to an event in the data, e.g. $a_1$ is ranked at the top. We use {\em moments} to denote such events. Given any preference profile $P$, we let $g(P,\vec\theta)=\frac 1 n\sum_{V\in P}g(V,\vec\theta)$, which is a function of $\vec\theta$. 
The GMM algorithm we will use then computes the parameter that minimizes the $2$-norm of the empirical moment conditions in the following way.
\begin{equation}\label{equ:gmm}
\text{GMM}_g(P)=\inf_{\vec{\theta}}||g(P,\vec{\theta})||_{2}^2
\end{equation}
In this paper, we will show results for $m=4$ and $k=2$. Our GMM works for other combinations of $k$ and $m$, if the model is identifiable. Otherwise the estimator is not consistent. For $m=4$ and $k=2$, the parameter of the $2$-PL is $\vec\theta=(\alpha,\vec\theta^{(1)},\vec\theta^{(2)})$. We will use the following $q=20$ moments from three categories.

(i) There are four moments, one for each of the four alternatives to be ranked at the top. Let $\{g_{i}:i\le 4\}$ denote the four moment conditions. Let $p_{i}=\alpha\theta^{(1)}_{i}+(1-\alpha)\theta^{(2)}_{i}$. For any $V\in{\mathcal L}({\mathcal A})$, we have $g_{i}(V,\vec\theta)=1-p_{i}$ if and only if $a_i$ is ranked at the top of $V$; otherwise $g_{i}(V,\vec\theta)=-p_{i}$.

(ii) There are $12$ moments, one for each combination of top-$2$ alternatives in a ranking. Let $\{g_{i_1i_2}:i_1\neq i_2\le 4\}$ denote the $12$ moment conditions. Let $p_{i_1i_2}=\alpha\frac {\theta^{(1)}_{i_1}\theta^{(1)}_{i_2}} {1-\theta^{(1)}_{i_1}}+(1-\alpha)\frac {\theta^{(2)}_{i_1}\theta^{(2)}_{i_2}} {1-\theta^{(2)}_{i_1}}$. For any $V\in{\mathcal L}({\mathcal A})$, we have $g_{i_1i_2}(V,\vec\theta)=1-p_{i_1i_2}$ if and only if $a_{i_1}$ is ranked at the top and $a_{i_2}$ is ranked at the second  in $V$; otherwise $g_{i_1i_2}(V,\vec\theta)=-p_{i_1i_2}$.

(iii) There are four moments that correspond to the following four rankings $a_1\succ a_2\succ a_3\succ a_4$, $a_2\succ a_3\succ a_4\succ a_1$, $a_3\succ a_4\succ a_1\succ a_2$, $a_4\succ a_1\succ a_2\succ a_3$. The corresponding $g_{i_1i_2i_3i_4}$'s are defined similarly.

To illustrate how GMM works, we give the following example. Suppose the data $P$ contain the following  20 rankings.

$\hfill
\begin{array}{c@{\ @\ }c}
8 & a_1\succ a_2\succ a_3\succ a_4\\
4 & a_2\succ a_3\succ a_4\succ a_1\\
3 & a_3\succ a_4\succ a_1\succ a_2\\
3 & a_4\succ a_1\succ a_2\succ a_3\\
2 & a_3\succ a_1\succ a_4\succ a_2\
\end{array}
\hfill$

Then, for example, $g_1(P, \vec\theta)=\frac 8 {20}-(\alpha\theta^{(1)}_1+(1-\alpha)\theta^{(2)}_1)$, corresponding to the moment that $a_1$ is ranked at top (category i). $g_{34}(P, \vec\theta)=\frac 3 {20}-(\frac {\alpha\theta^{(1)}_3\theta^{(1)}_4} {\theta^{(1)}_1+\theta^{(1)}_2+\theta^{(1)}_4}+\frac {(1-\alpha)\theta^{(2)}_3\theta^{(2)}_4} {\theta^{(2)}_1+\theta^{(2)}_2+\theta^{(2)}_4})$, corresponding to the moment of $a_3\succ a_4\succ$ others (category ii). 
$g_{2341}(P,\vec\theta)=\frac 4 {20}-(\frac {\alpha\theta^{(1)}_2\theta^{(1)}_3\theta^{(1)}_4} {(\theta^{(1)}_3+\theta^{(1)}_4+\theta^{(1)}_1)(\theta^{(1)}_4+\theta^{(1)}_1)}+\frac {(1-\alpha)\theta^{(2)}_2\theta^{(2)}_3\theta^{(2)}_4} {(\theta^{(2)}_3+\theta^{(2)}_4+\theta^{(2)}_1)(\theta^{(2)}_4+\theta^{(2)}_1)})$, corresponding to the moment of $a_2\succ a_3\succ a_4\succ a_1$ (category iii). Remember that $\sum^4_{i=1}\theta^{(1)}_i=\sum^4_{i=1}\theta^{(2)}_i=1$.

The choices of these moment conditions are based on the proof of Theorem~\ref{thm:id}, so that the $2$-PL is strictly identifiable w.r.t.~these moment conditions. Therefore, our simple GMM algorithm is the following.

\begin{algorithm}%
{\bf Input}: Preference profile $P$ with $n$ full rankings.

Compute the frequency of each of the $20$ moments\\
Compute the output according to (\ref{equ:gmm})

\caption{GMM for $2$-PL}
\label{GMMalg}
\end{algorithm}

The theoretical guarantee of our GMM is its consistency, as we defined in Section \ref{sec:cont}.
\begin{thm}\label{thm:gmm}
Algorithm~\ref{GMMalg} is consistent w.r.t.~$2$-PL, where there exists $\epsilon>0$ such that each parameter is in $[\epsilon,1]$.
\end{thm}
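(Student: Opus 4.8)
The plan is to verify the hypotheses of the classical consistency theorem for generalized method of moments / minimum-distance estimators (as in \cite{Hansen82:Large} or the textbook extremum-estimator argument): compactness of the parameter space, continuity of the empirical objective, a uniform law of large numbers, and a population identification condition that makes the limiting objective have a unique minimizer at the truth. Write $\vec\theta^\ast=(\alpha^\ast,\vec\theta^{(1)\ast},\vec\theta^{(2)\ast})$ for the data-generating parameter, let $\Theta_\epsilon$ be the parameter space of $2$-PL restricted so every coordinate (including $\alpha$) lies in $[\epsilon,1]$, let $\widehat{\vec\theta}_n$ be the output of Algorithm~\ref{GMMalg} (a minimizer of $Q_n(\vec\theta)=\|g(P,\vec\theta)\|_2^2$), and set $Q_0(\vec\theta)=\|E_{\vec\theta^\ast}[g(V,\vec\theta)]\|_2^2$.

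First I would dispatch the easy ingredients. $\Theta_\epsilon$ is a closed and bounded subset of the affine subspace $\{\sum_i\theta^{(r)}_i=1\}$, hence compact; and on $\Theta_\epsilon$ every denominator appearing in the $20$ moment functions (such as $1-\theta^{(r)}_i$ or $\theta^{(r)}_i+\theta^{(r)}_j$) is bounded below by a positive constant depending only on $\epsilon$, so each coordinate of $g(V,\vec\theta)$ is continuous on $\Theta_\epsilon$ and bounded in absolute value by $1$. For the uniform law of large numbers I would exploit the special structure of these moments: each coordinate of $g(P,\vec\theta)$ has the form $\widehat q-q(\vec\theta)$, where $\widehat q$ is the empirical frequency of the relevant event (an average of i.i.d.\ $\{0,1\}$ variables, \emph{not} depending on $\vec\theta$) and $q(\vec\theta)$ is the model probability of that event. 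Hence $g(P,\vec\theta)-E_{\vec\theta^\ast}[g(V,\vec\theta)]$ is the $\vec\theta$-free random vector $\widehat{\vec q}-\vec q(\vec\theta^\ast)$, which tends to $\vec 0$ a.s.\ by the strong law; since $\|g(P,\vec\theta)\|_2$ and $\|E_{\vec\theta^\ast}[g(V,\vec\theta)]\|_2$ are uniformly bounded, $Q_n\to Q_0$ uniformly on $\Theta_\epsilon$.

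The crux is the identification condition: $Q_0$ should vanish only at $\vec\theta^\ast$ modulo label switching. One direction is the defining property of moment conditions. For the converse, suppose a parameter $\vec\theta=(\alpha,\vec\theta^{(1)},\vec\theta^{(2)})\in\Theta_\epsilon$ matches all $20$ moments of $\vec\theta^\ast$; rearranging, the vector $\vec\alpha=[\alpha^\ast,\,1-\alpha^\ast,\,-\alpha,\,-(1-\alpha)]^\top$ lies in the kernel of the $20\times 4$ matrix $M$ whose columns record the $20$ moment values of the four Plackett--Luce components $\vec\theta^{(1)\ast},\vec\theta^{(2)\ast},\vec\theta^{(1)},\vec\theta^{(2)}$, and $\vec\alpha\neq\vec 0$ because $\alpha^\ast\geq\epsilon$. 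If the four components were pairwise distinct, I would rerun the proof of Theorem~\ref{thm:id}: that proof produces, in each case, a full-rank $4\times 4$ matrix obtained by linearly combining rows of $\mathbf F^2_4$, and — by the way the $20$ moments were selected — these combinations involve only entries of $M$ together with the all-ones row; hence $M\vec\alpha=\vec 0$ would force that full-rank matrix to annihilate $\vec\alpha$, a contradiction. Therefore two of the four components coincide, and a short bookkeeping argument of the kind used in Lemma~\ref{lem:rel} shows the only kernel-consistent possibility is $\{\vec\theta^{(1)},\vec\theta^{(2)}\}=\{\vec\theta^{(1)\ast},\vec\theta^{(2)\ast}\}$ with the matching weights, i.e.\ $\vec\theta\equiv\vec\theta^\ast$ up to relabeling; the degenerate case $\vec\theta^{(1)\ast}=\vec\theta^{(2)\ast}$ reduces to identifiability of a single Plackett--Luce model.

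With compactness, continuity, uniform convergence, and this unique-minimizer property, the standard extremum-estimator argument gives $\widehat{\vec\theta}_n\to\vec\theta^\ast$ in probability, understood modulo label switching (or literally, after restricting $\Theta_\epsilon$ to a fundamental domain, e.g.\ ordering the two components lexicographically). I expect the identification step to be the main obstacle: one must audit every case of the proof of Theorem~\ref{thm:id} and check that each $\hat{\mathbf F}$ and $\mathbf F^\ast$ constructed there uses only the $20$ moments of categories (i)--(iii) (up to the irrelevant relabeling of alternatives), so that agreement on those $20$ moments already pins the parameter down; the remaining pieces are routine.
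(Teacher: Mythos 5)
Your proposal is correct and follows essentially the same route as the paper: the paper verifies the hypotheses of the standard GMM consistency theorem (citing Theorem 3.1 of Hall, 2005) — compactness of $[\epsilon,1]^{9}$, continuity and boundedness of the moment functions, i.i.d./ergodicity, and global identification supplied by Theorem~\ref{thm:id}, whose proof uses only the chosen moment conditions. You merely inline the extremum-estimator argument (uniform LLN plus unique minimizer) instead of citing it, and spell out the identification step in more detail, but the substance is the same.
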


\begin{proof}
We will check all assumptions in Theorem 3.1 in \cite{Hall05:Generalized}. 

Assumption 3.1: Strict Stationarity: the ($n\times 1$) random vectors $\{v_t; -\infty<t<\infty\}$ form a strictly stationary process with sample space $\mathcal S\subseteq\mathbb{R}^n$. 

As the data are generated i.i.d., the process is strict stationary.

Assumption 3.2: Regularity Conditions for $g(\cdot,\cdot)$: the function $g: \mathcal S\times\Theta\rightarrow\mathbb{R}^q$ where $q<\infty$, satisfies: (i) it is continuous on $\Theta$ for each $P\in\mathcal S$; (ii) $E[g(P,\vec\theta)]$ exists and is finite for every $\theta\in\Theta$; (iii) $E[g(P,\vec\theta)]$ is continuous on $\Theta$.

Our moment conditions satisfy all the regularity conditions since $g(P,\vec\theta)$ is continuous on $\Theta$ and bounded in $[-1,1]^9$.

Assumption 3.3: Population Moment Condition. The random vector $v_t$ and the parameter vector $\theta_0$ satisfy the ($q\times 1$) population moment condition: $E[g(P, \theta_0)]=0$.

This assumption holds by the definition of our GMM. 

Assumption 3.4 Global Identification. $E[g(P, \vec{\theta'})]\neq 0$ for all $\vec{\theta'}\in\Theta$ such that $\vec{\theta'}\neq\theta_0$.

This is proved in Theorem~\ref{thm:id}.

Assumption 3.7 Properties of the Weighting Matrix. $W_t$ is a positive semi-definite matrix which converges in probability to the positive definite matrix of constants $W$.

This holds because $W=I$.

Assumption 3.8 Ergodicity. The random process $\{v_t;-\infty<t<\infty\}$ is ergodic.

Since the data are generated i.i.d., the process is ergodic.

Assumption 3.9 Compactness of $\Theta$. $\Theta$ is a compact set.

$\Theta=[\epsilon, 1]^{9}$ is compact.

Assumption 3.10 Domination of $g(P,\vec\theta)$. $E[\sup_{\theta\in\Theta}||g(P,\vec\theta)||]<\infty$.

This assumption holds because all moment conditions are finite.

Theorem 3.1 Consistency of the Parameter Estimator.
If Assumptions 3.1-3.4 and 3.7-3.10 hold then $\hat\theta_T\overset{p}\to\theta_0$

\end{proof}

Originally all parameters lie in open intervals (0, 1]. The $\epsilon$ requirement in the theorem is introduced to make the parameter space compact, i.e. all parameters are chosen from closed intervals. The proof is done by applying Theorem 3.1 in~\cite{Hall05:Generalized}. The main hardness is the identifiability of $2$-PL w.r.t.~the moment conditions used in our GMM. Our proof of the identifiability of $2$-PL (Theorem~\ref{thm:id}) only uses the $20$ moment conditions described above.\footnote{In fact our proof only uses $16$ of them ($4$ out of the $12$ moment conditions in category (ii) are redundant). However, our synthetic experiments show that using $20$ moments improves statistical efficiency without sacrificing too much computational efficiency.}

\noindent {\bf Complexity of GMM.} For learning $k$-PL with $m$ alternatives and $n$ rankings with EMM, each E-step performs $O(nk^2)$ operations and each iteration of the MM algorithm for the M-step performs $O(m^2nk)$ operations. Our GMM for $k=2$ and $m=4$ has overall complexity $O(n)$. The complexity of calculating moments is $O(n)$ and the complexity of optimization depends only on $m$ and $k$. 

\begin{figure*}[htp]
\centerline{\includegraphics[width=\textwidth]{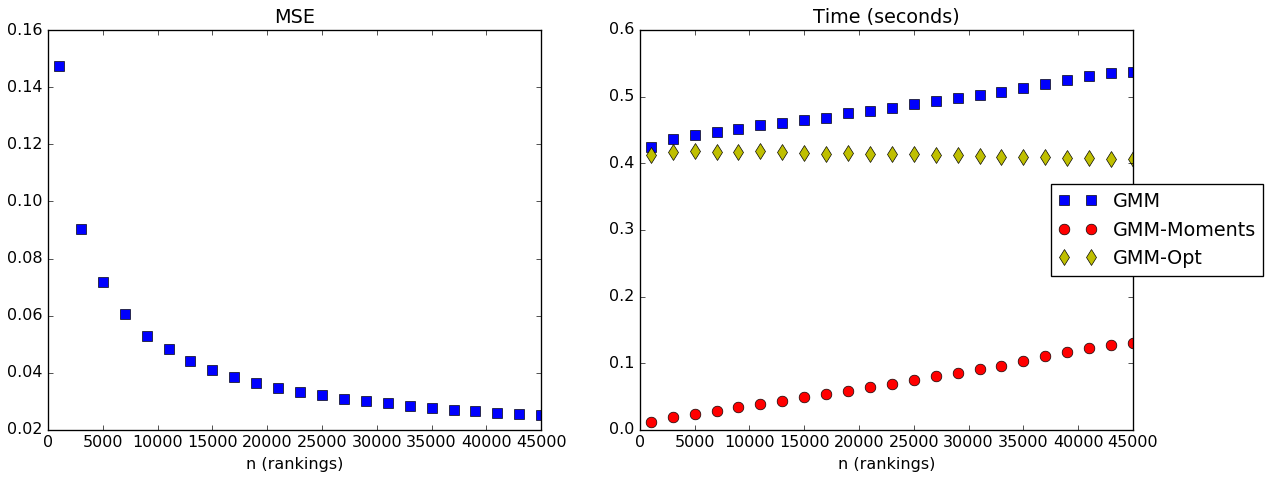}}
\caption{The MSE and running time of GMM.  GMM-Moments is the time to calculate the moment condition values observed in the data and GMM-Opt is the time to perform the optimization.  Values are calculated over 50000 trials.}
\label{fig:gmm}
\end{figure*}

\section{Experiments}
\label{sec:exp}
The performance of our GMM algorithm (Algorithm~\ref{GMMalg}) is compared to the EMM algorithm~\cite{Gormley08:Exploring} for $2$-PL with respect to running time and statistical efficiency for synthetic data.  The synthetic datasets are generated as follows.

$\bullet$ Generating the ground truth: for \(k=2\) mixtures and \(m=4\) alternatives, the mixing coefficient \(\alpha{^\ast}\) is generated uniformly at random and the Plackett-Luce components $\vec\theta^{(1)}$ and $\vec\theta^{(2)}$  are each generated from the Dirichlet distribution Dir\((\vec{1})\).

$\bullet$ Generating data: given a ground truth \(\vec{\theta}^\ast\), we generate each ranking with probability \(\alpha{^\ast}\) from the PL model parameterized by $\vec\theta^{(1)}$ and with probability \(1-\alpha{^\ast}\) from the PL model parameterized by $\vec\theta^{(2)}\) up to 45000 full rankings.

The GMM algorithm is implemented in Python 3.4 and termination criteria for the optimization are convergence of the solution and the objective function values to within \(10^{-10}\) and \(10^{-6}\) respectively. The optimization of (\ref{equ:gmm}) uses  the {\sf fmincon} function through the MATLAB Engine for Python.

The EMM algorithm is also implemented in Python 3.4 and the E and M steps are repeated together for a fixed number of iterations without convergence criteria.  %
The running time of EMM is largely determined by the total number of iterations of the MM algorithm and we look to compare the best EMM configuration with comparable running time to GMM.   We have tested all configurations of EMM with 10 and 20 overall MM iterations, respectively. We found that the optimal configurations are EMM-10-1 and EMM-20-1 (shown in Figure~\ref{fig:gmmemm}, results for other configurations are omitted), where EMM-20-1 means $20$ iterations of E step, each of which uses $1$ MM iteration.

We use the Mean Squared Error (MSE) as the measure of statistical efficiency defined as MSE~$=E(\parallel \vec{\theta} - \vec{\theta}^\ast \parallel_{2}^{2})$.

All experiments are run on an Ubuntu Linux server with Intel Xeon E5 v3 CPUs each clocked at 3.50 GHz.

\noindent{\bf Results.} The comparison of the performance of the GMM algorithm to the EMM algorithm is presented in Figure \ref{fig:gmmemm} for up to \(n = 2000\) rankings.  Statistics are calculated over 2000 trials (datasets).  We observe that:

$\bullet$ 
GMM is significantly faster than EMM.  The running time of GMM grows at a much slower rate in $n$.  %

$\bullet$ 
GMM achieves competitive MSE and the MSE of EMM does not improve over $n$. In particular, when $n$ is more than  $1000$, GMM  achieves smaller MSE.

The implication is that GMM may be better suited for reasonably large datasets where running time becomes infeasibly large with EMM. Moreover, it is possible that the GMM algorithm can be further improved by using a more accurate optimizer or another set of moment conditions.  GMM can also be used to provide a good initial point for other methods such as the EMM.

For larger datasets, the performance of the GMM algorithm is shown in Figure \ref{fig:gmm} for up to \(n = 45000\) rankings calculated over 50000 trials.
As the data size increases, GMM converges toward the ground truth, which verifies our consistency theorem (Theorem~\ref{thm:gmm}).  The overall running time of GMM shown in the figure is comprised of the time to calculate the moments from data (GMM-Moments) and the time to optimize the objective function (GMM-Opt).  The time for calculating the moment values increases linearly in \(n\), but it is dominated by the time to perform the optimization.

\section{Summary and Future Work}
In this paper we address the problem of identifiability and efficient learning for Plackett-Luce mixture models. We show that for any $k\geq 2$, $k$-PL for no more than $2k-1$ alternatives is non-identifiable and this bound is tight for $k=2$.
For generic identifiability, we prove that the mixture of $k$ Plackett-Luce models over $m$ alternatives is {\em generically identifiable} if $k\leq\lfloor\frac {m-2} 2\rfloor!$. We also propose a GMM algorithm for learning $2$-PL with four or more alternatives. Our experiments show that our GMM algorithm is significantly faster than the EMM algorithm in \cite{Gormley08:Exploring}, while achieving competitive statistical efficiency.

There are many directions for future research. An obvious open question is whether $k$-PL is identifiable for $2k$ alternatives for $k\ge 3$, which we conjecture to be true. It is  important to study how to efficiently check whether a learned parameter is identifiable for $k$-PL when $m<2k$. Can we further improve the statistical efficiency and computational efficiency for learning $k$-PL? We also plan to develop efficient implementations of  our GMM algorithm and apply it widely to various learning problems with big rank data.

\pdfbookmark[1]{Acknowledgements}{bkmkacks}
\section*{Acknowledgements}

This work is supported by the National Science Foundation under grant IIS-1453542 and a Simons-Berkeley research fellowship. We thank all reviewers for helpful comments and suggestions.

\newpage
\pdfbookmark[1]{References}{bkmkrefs}
\bibliography{references}

\begin{thebibliography}{31}
\providecommand{\natexlab}[1]{#1}
\providecommand{\url}[1]{\texttt{#1}}
\expandafter\ifx\csname urlstyle\endcsname\relax
  \providecommand{\doi}[1]{doi: #1}\else
  \providecommand{\doi}{doi: \begingroup \urlstyle{rm}\Url}\fi

\bibitem[Allman et~al.(2009)Allman, Matias, and
  Rhodes]{Allman09:Identifiability}
Elizabeth~S. Allman, Catherine Matias, and John~A. Rhodes.
\newblock Identifiability of parameters in latent structure models with many
  observed variables.
\newblock \emph{The Annals of Statistics}, 37\penalty0 (6A):\penalty0
  3099--3132, 2009.

\bibitem[Ammar et~al.(2014)Ammar, Oh, Shah, and Voloch]{Ammar14:What}
Ammar Ammar, Sewoong Oh, Devavrat Shah, and Luis~Filipe Voloch.
\newblock What's your choice?: learning the mixed multi-nomial.
\newblock In \emph{ACM SIGMETRICS Performance Evaluation Review}, volume~42,
  pages 565--566. ACM, 2014.

\bibitem[Awasthi et~al.(2014)Awasthi, Blum, Sheffet, and
  Vijayaraghavan]{Awasthi14:Learning}
Pranjal Awasthi, Avrim Blum, Or~Sheffet, and Aravindan Vijayaraghavan.
\newblock {Learning Mixtures of Ranking Models}.
\newblock In \emph{Proceedings of Advances in Neural Information Processing
  Systems}, pages 2609--2617, 2014.

\bibitem[Azari~Soufiani et~al.(2013)Azari~Soufiani, Chen, Parkes, and
  Xia]{Azari13:Generalized}
Hossein Azari~Soufiani, William Chen, David~C. Parkes, and Lirong Xia.
\newblock Generalized method-of-moments for rank aggregation.
\newblock In \emph{Proceedings of Advances in Neural Information Processing
  Systems (NIPS)}, Lake Tahoe, NV, USA, 2013.

\bibitem[Caron and Traynor(2005)]{Caron05:Zero}
Richard Caron and Tim Traynor.
\newblock The zero set of a polynomial.
\newblock \emph{WSMR Report}, pages 05--02, 2005.

\bibitem[Cheng et~al.(2010)Cheng, Dembczynski, and
  H{\"u}llermeier]{Cheng10:Label}
Weiwei Cheng, Krzysztof~J. Dembczynski, and Eyke H{\"u}llermeier.
\newblock Label ranking methods based on the plackett-luce model.
\newblock \emph{Proceedings of the 27th International Conference on Machine
  Learning (ICML-10)}, pages 215--222, 2010.

\bibitem[Chierichetti et~al.(2015)Chierichetti, Dasgupta, Kumar, and
  Lattanzi]{Chierichetti15:On}
Flavio Chierichetti, Anirban Dasgupta, Ravi Kumar, and Silvio Lattanzi.
\newblock On learning mixture models for permutations.
\newblock \emph{Proceedings of the 2015 Conference on Innovations in
  Theoretical Computer Science}, pages 85--92, 2015.

\bibitem[Dasgupta(1999)]{Dasgupta99:Learning}
Sanjoy Dasgupta.
\newblock Learning mixtures of gaussians.
\newblock \emph{Foundations of Computer Science, 1999. 40th Annual Symposium
  on}, pages 634--644, 1999.

\bibitem[Dwork et~al.(2001)Dwork, Kumar, Naor, and Sivakumar]{Dwork01:Rank}
Cynthia Dwork, Ravi Kumar, Moni Naor, and D.~Sivakumar.
\newblock Rank aggregation methods for the web.
\newblock In \emph{Proceedings of the 10th World Wide Web Conference}, pages
  613--622, 2001.

\bibitem[Gormley and Murphy(2008)]{Gormley08:Exploring}
Isobel~Claire Gormley and Thomas~Brendan Murphy.
\newblock Exploring voting blocs within the {Irish} electorate: {A} mixture
  modeling approach.
\newblock \emph{Journal of the American Statistical Association}, 103\penalty0
  (483):\penalty0 1014--1027, 2008.

\bibitem[Hall(2005)]{Hall05:Generalized}
Alastair~R. Hall.
\newblock \emph{{Generalized Method of Moments}}.
\newblock Oxford University Press, 2005.

\bibitem[Hansen(1982)]{Hansen82:Large}
Lars~Peter Hansen.
\newblock {Large Sample Properties of Generalized Method of Moments
  Estimators}.
\newblock \emph{Econometrica}, 50\penalty0 (4):\penalty0 1029--1054, 1982.

\bibitem[Hunter(2004)]{Hunter04:MM}
David~R. Hunter.
\newblock {MM algorithms for generalized Bradley-Terry models}.
\newblock In \emph{The Annals of Statistics}, volume~32, pages 384--406, 2004.

\bibitem[Iannario(2010)]{Iannario10:On-the-identifiability}
Maria Iannario.
\newblock On the identifiability of a mixture model for ordinal data.
\newblock \emph{Metron}, 68\penalty0 (1):\penalty0 87--94, 2010.

\bibitem[Kalai et~al.(2012)Kalai, Moitra, and Valiant]{Kalai12:Disentangling}
Adam~Tauman Kalai, Ankur Moitra, and Gregory Valiant.
\newblock Disentangling gaussians.
\newblock In \emph{Communications of the ACM}, volume~55, pages 113--120, 2012.

\bibitem[Kruskal(1977)]{Kruskal77:Three}
Joseph~B Kruskal.
\newblock Three-way arrays: rank and uniqueness of trilinear decompositions,
  with application to arithmetic complexity and statistics.
\newblock \emph{Linear algebra and its applications}, 18\penalty0 (2):\penalty0
  95--138, 1977.

\bibitem[Liu(2011)]{Liu11:Learning}
Tie-Yan Liu.
\newblock \emph{Learning to Rank for Information Retrieval}.
\newblock Springer, 2011.

\bibitem[Lu and Boutilier(2014)]{Lu14:Effective}
Tyler Lu and Craig Boutilier.
\newblock Effective sampling and learning for mallows models with
  pairwise-preference data.
\newblock \emph{The Journal of Machine Learning Research}, 15\penalty0
  (1):\penalty0 3783--3829, 2014.

\bibitem[Luce(1959)]{Luce59:Individual}
Robert~Duncan Luce.
\newblock \emph{{Individual Choice Behavior: A Theoretical Analysis}}.
\newblock Wiley, 1959.

\bibitem[Mao et~al.(2013)Mao, Procaccia, and Chen]{Mao13:Better}
Andrew Mao, Ariel~D. Procaccia, and Yiling Chen.
\newblock Better human computation through principled voting.
\newblock In \emph{Proceedings of the National Conference on Artificial
  Intelligence (AAAI)}, Bellevue, WA, USA, 2013.

\bibitem[Marden(1995)]{Marden95:Analyzing}
John~I. Marden.
\newblock \emph{Analyzing and Modeling Rank Data}.
\newblock Chapman \& Hall, 1995.

\bibitem[McFadden(1974)]{McFadden74:Conditional}
Daniel McFadden.
\newblock Conditional logit analysis of qualitative choice behavior.
\newblock In \emph{Frontiers of Econometrics}, pages 105--142, New York, NY,
  1974. Academic Press.

\bibitem[McLachlan and Peel(2004)]{McLachlan04:Finite}
Geoffrey McLachlan and David Peel.
\newblock \emph{{Finite Mixture Models}}.
\newblock John Wiley \& Sons, 2004.

\bibitem[McLachlan and Basford(1988)]{McLachlan88:Mixture}
Geoffrey~J. McLachlan and Kaye~E. Basford.
\newblock \emph{{Mixture Models: Inference and Applications to Clustering}}.
\newblock Marcel Dekker, 1988.

\bibitem[Oh and Shah(2014)]{Oh14:Learning}
Sewoong Oh and Devavrat Shah.
\newblock Learning mixed multinomial logit model from ordinal data.
\newblock In \emph{Advances in Neural Information Processing Systems}, pages
  595--603, 2014.

\bibitem[Plackett(1975)]{Plackett75:Analysis}
Robin~L. Plackett.
\newblock The analysis of permutations.
\newblock \emph{Journal of the Royal Statistical Society. Series C (Applied
  Statistics)}, 24\penalty0 (2):\penalty0 193--202, 1975.

\bibitem[Stephens(2000)]{Stephens00:Dealing}
Matthew Stephens.
\newblock Dealing with label switching in mixture models.
\newblock \emph{Journal of the Royal Statistical Society: Series B (Statistical
  Methodology)}, 62\penalty0 (4):\penalty0 795--809, 2000.

\bibitem[Suh et~al.(2016)Suh, Tan, and Zhao]{Suh16:Adversarial}
Changho Suh, Vincent~YF Tan, and Renbo Zhao.
\newblock Adversarial top-$ k $ ranking.
\newblock \emph{arXiv preprint arXiv:1602.04567}, 2016.

\bibitem[Teicher(1961)]{Teicher61:Identifiability}
Henry Teicher.
\newblock Identifiability of mixtures.
\newblock \emph{The Annals of Mathematical Statistics}, 32\penalty0
  (1):\penalty0 244--248, 1961.

\bibitem[Teicher(1963)]{Teicher63:Identifiability}
Henry Teicher.
\newblock Identifiability of finite mixtures.
\newblock \emph{The Annals of Mathematical Statistics}, 34\penalty0
  (4):\penalty0 1265--1269, 1963.

\bibitem[Thurstone(1927)]{Thurstone27:Law}
Louis~Leon Thurstone.
\newblock A law of comparative judgement.
\newblock \emph{Psychological Review}, 34\penalty0 (4):\penalty0 273--286,
  1927.

\end{thebibliography}
\bibliographystyle{plainnat}

\end{document}